\newtheorem{theorem}{Theorem}[section]
\newtheorem{proposition}[theorem]{Proposition} 
\newtheorem{definition}{Definition}[section]
\newenvironment{remark}{\par\noindent{\bf Remark.\ } \it}
\newcommand{\mbf}{\mathbf}
\newcommand{\mbb}{\mathbb}
\newcommand{\cond}{\,\,|\,\,}
\newcommand{\bcond}{\,\,\big|\,\,}
\newcommand{\argmax}{\text{argmax}}
\newcommand{\E}{\mbf{E}}
\newcommand{\Var}{\mathrm{Var}}
\newcommand{\erf}{\mathrm{erf}}
\newcommand{\erfc}{\mathrm{erfc}}
\begin{document}

\title{Generating Weighted MAX-2-SAT Instances with Frustrated Loops: an RBM Case Study}

\author{Yan Ru Pei}
\email{email: yrpei@ucsd.edu}
\affiliation{Department of Physics, University of California, San Diego, La Jolla, CA 92093}

\author{Haik Manukian}
\email{email: hmanukia@ucsd.edu}
\affiliation{Department of Physics, University of California, San Diego, La Jolla, CA 92093}

\author{M. Di Ventra}
\email{email: diventra@physics.ucsd.edu}
\affiliation{Department of Physics, University of California, San Diego, La Jolla, CA 92093}

\begin{abstract}
Many optimization problems can be cast into the maximum satisfiability (MAX-SAT) form, and many solvers have been developed for tackling such problems. To evaluate a MAX-SAT solver, it is convenient to generate hard MAX-SAT instances with known solutions. Here, we propose a method of generating weighted MAX-2-SAT instances inspired by the frustrated-loop algorithm used by the quantum annealing community. We extend the algorithm for instances of general bipartite couplings, with the associated optimization problem being the minimization of the restricted Boltzmann machine (RBM) energy over the nodal values, which is useful for effectively pre-training the RBM. The hardness of the generated instances can be tuned through a central parameter known as the frustration index. Two versions of the algorithm are presented: the random- and structured-loop algorithms. For the random-loop algorithm, we provide a thorough theoretical and empirical analysis on its mathematical properties from the perspective of frustration, and observe empirically a double phase transition behavior in the hardness scaling behavior driven by the frustration index. For the structured-loop algorithm, we show that it offers an improvement in hardness over the random-loop algorithm in the regime of high loop density, with the variation of hardness tunable through the concentration of frustrated weights.
\end{abstract}

\maketitle

\section{Introduction}

A boolean satisfiability problem is the problem of finding a truth assignment of the boolean variables in a boolean formula such that the formula is satisfied. A boolean formula is commonly written in conjunctive normal form (CNF) consisting of clauses. A clause is formed by the logical disjunction (OR) of literals (boolean variables or their negations), and the boolean formula is expressed as the logical conjunction (AND) of these clauses. In other words, a clause is satisfied if at least one of its literals evaluates to true, and all clauses need to be satisfied in order to solve the boolean satisfiability problem.

A maximum-satisfiability (MAX-SAT) problem is an optimization problem where the objective is to find the truth assignment of boolean variables such that the number of satisfied clauses is maximized (without requiring the satisfaction of all clauses). A Max-2-SAT problem is a Max-SAT problem with at most 2 literals (variables or their negation) per clause \cite{max2sat}. Many optimization problems can be reduced to this particular problem \cite{2sat_re,2sat_ising,max_re2}, making it a valuable testing ground for various algorithms/solvers. In this paper, we focus on the weighted MAX-2-SAT problem, which is a more general version of the MAX-2-SAT problem where each clause is assigned some non-negative weight \cite{boo_com}, and the objective is to find the truth assignments of the literals that maximizes the combined weight of the satisfied clauses.

The optimization version of the (weighted) MAX-2-SAT problem is known to be NP-hard \cite{2satnp}, and it is difficult to check whether a solver has found the optimal solution for a given instance. This makes the evaluation of the performance of a MAX-SAT solver rather impractical. An algorithm that generates Max-2-SAT instances of tunable hardness such that the solution is known in advance (planted solution) would therefore be very beneficial. Such algorithms exist, but they generally suffer from one or more of the following drawbacks: they are unable to generate sufficiently hard instances \cite{draw_1}, the planted solution is not necessarily the optimal solution \cite{draw_2}, they are limited in the structures of the instances that they can generate \cite{loop}, or they require a considerable amount of computational time to generate the instances \cite{draw_3}.

In this paper, we introduce an algorithm that is capable of generating weighted Max-2-SAT instances of tunable hardness in both the low and high clause density regimes which does not suffer from the above limitations. For ease of theoretical analysis and empirical studies, we can reduce a given weighted MAX-2-SAT problem into the following problem (see Section \ref{2spin})
\begin{equation}
\label{RBM-Eq}
\begin{split}
&\text{Find a configuration of } \mathbf{v}\in \{-1,1\}^n \text{ and } \mathbf{h}\in \{-1,1\}^m \\
&\text{such that the following energy function is minimized:} \\
&E ( \mathbf{v},\mathbf{h}) = -\big( \sum_{ij} W_{ij} v_ih_j + \sum_i a_iv_i + \sum_j b_jh_j \big),
\end{split}
\end{equation}
where $\mathbf{a}\in \mathbb{R}^n$, $\mathbf{b}\in \mathbb{R}^m$, and $\mathbf{W}\in \mathbb{R}^{n\times m}$. We refer to the above problem for some given $\{ \mbf{a}, \mbf{b}, \mbf{W} \}$ as an {\it RBM instance}. Note that the reduction from a MAX-2-SAT instance to an RBM instance is always possible (see Section \ref{2spin}), meaning that it is sufficient for us to focus on generating RBM instances. However, this reduction involves doubling the number of boolean variables and introducing clauses of large weights (see Section \ref{bi_conv}), thus introducing unnecessary computational and memory burden in a practical implementation of the algorithm. Therefore, it is still useful to have an algorithm capable of generating instances directly corresponding to an underlying graph structure. In this paper, we focus on generating instances directly on bipartite graphs, though our algorithm can be easily extended to any underlying graph structure (see Section \ref{ext}).

The quantity $E(\mathbf{v},\mathbf{h})$ as appeared in Eq.~(\ref{RBM-Eq}) is also called the {\it RBM energy} in the field of machine learning, because it is the energy function for an RBM (restricted Boltzmann machine) \cite{rbm}. The nodal configuration with the minimum RBM energy (or {\it RBM ground state}) corresponds to the mode of the RBM joint PMF (probability mass function) \cite{rbm, prob}. Finding the mode of the joint PMF allows for a much more efficient sampling of the PMF \cite{multi_mode, mode_mcmc, mode_jump}, which helps improve the RBM pre-training in both decreasing the number of iterations to convergence and minimizing the KL-divergence \cite{kl,dart,worm}. A brief extension of this discussion will be presented in Section \ref{app}, and a more thorough treatment of this topic will be given in our related work \cite{mode_train}.

To generate RBM instances with a known global optimum and tunable hardness, we take inspiration from the {\it frustrated-loop algorithm} used by the quantum annealing community to benchmark the performance of quantum annealers \cite{loop}. However, due to the high connectivity and non-local coupling nature of the RBM instance, the frustrated-loop algorithm in its original form is unable to provide sufficient hardness for instances at high density; this is because the positive and negative edges will intersect frequently, resulting in instances with a lower frustration index than the one desired (see Section \ref{int}). In this work, we modify the loop algorithm in a way such that the frustration index can be controlled and tuned directly (see Section \ref{tune}), which allows us to generate instances of desired hardness.

This paper is organized as follows. In Section \ref{spin-glass-RBM}, we show explicitly the reduction of a general weighted MAX-2-SAT problem into an RBM instance with Ising-type coupling \cite{Mezard}. In Section \ref{frustration}, we introduce the {\it frustration index} \cite{frus} in the context of a {\it gauged RBM}, and discuss its connection to the population of local minima. In Section \ref{frus_loop_algo}, we introduce the {\it random} frustrated loop algorithm and propose a general method for direct control of the frustration index. We also investigate some of its interesting mathematical properties, and discuss the limitations of the algorithm in its original form. In Section \ref{modi_algo}, we make improvements on the algorithm by giving the loops certain geometrical structures, and we term the new algorithm {\it structured} loop algorithm. We show analytically that the new algorithm has the ability to generate hard instances in the regime of high loop density. In Section \ref{test}, we study empirically how the hardness of the generated instances scale with RBM size, frustration index, and loop density. We observe the {\it hardness peaks} \cite{easy-hard-easy} with respect to the loop density for systems of different sizes, and present a double phase transition \cite{phase} driven by the frustration index in the hardness scaling behavior with respect to the system size. We also provide empirical justification of the hardness improvement of the structured loop algorithm over the random counterpart. In Section \ref{app}, we offer a heuristic discussion on some practical applications of this work to the task of pre-training an RBM.

If the interest of reader is only the generation of frustrated RBM instances, we direct the reader to Section \ref{secfrusalgo} and \ref{sec_slog} respectively for the random and structured loop algorithms, where the pseudocode for the generation method is given. The MATLAB implementation of both algorithms are available in the Github repository \path{PeaBrane/Ising-Simulation}.

\section{From Weighted MAX-2-SAT to Bipartite Spin Glass}\label{spin-glass-RBM}
\label{2spin}

The goal of this section is to show that any given weighted MAX-2-SAT instance can be reduced to an RBM instance with Ising-type couplings (meaning that the nodal values are $\{-1,1\}$). Similarly, any given RBM instance can also be converted back into a MAX-2-SAT instance. This means that we can effectively test the performance of a weighted MAX-2-SAT solver on the corresponding RBM instance.

The conversion from an RBM instance to a weighted MAX-2-SAT problem is relatively straightforward and will be described in Section \ref{inv_conv}. The reduction from a weighted MAX-2-SAT instance to an RBM instance is less obvious and is done in three stages. First, we reduce a general weighted MAX-2-SAT instance into a QUBO (quadratic binary optimization) problem \cite{qubo}. Then, we reduce the QUBO problem into a larger QUBO problem of bipartite form. Finally, we convert the binary node values of $\{0,1\}$ into $\{-1,1\}$, and introduce two extra spins to incorporate the biases. 

\subsection{From Weighted MAX-2-SAT to QUBO}

A QUBO instance is the problem of maximizing the following quadratic polynomial
\begin{equation}
\label{qubo}
\sum_{i=1}^nB_i x_i + \sum_{i=1}^n\sum_{j=i+1}^nQ_{ij} x_i x_j
\end{equation}
over the binary variables $\mathbf{x}\in \{0,1\}^n$, with the coefficients $B_i$ and $Q_{ij}$ being real. The reduction from a weighted MAX-2-SAT problem to a QUBO problem is rather straightforward, and it simply involves converting each clause into the equivalent QUBO form
\begin{equation*}
\begin{split}
(x_i \lor x_j) &\rightarrow x_i + x_j - x_ix_j, \\
(\neg x_i \lor x_j) &\rightarrow 1 - x_i + x_ix_j, \\
(x_i \lor \neg x_j) &\rightarrow 1 - x_j + x_ix_j, \\
(\neg x_i \lor \neg x_j) &\rightarrow 1 - x_ix_j. \\
\end{split}
\end{equation*}
The correctness of the conversion can be easily verified by treating a true assignment as $1$ and a false assignment as $0$, and plugging the corresponding binary values into the right-hand side (RHS) of the above expressions. We then sum the QUBO terms corresponding to all the clauses weighted respectively, and the resulting expression is in the QUBO form if we ignore the constant offset. It is not hard to see that maximizing the summed weights of the satisfied clauses is equivalent to maximizing the quadratic polynomial in Eq.~(\ref{qubo}).

\subsection{Bipartite Conversion}
\label{bi_conv}

A bipartite QUBO instance involves binary variables in two disjoint sets, which we can denote as $\mbf{v} \in \{0,1\}^n$ and $\mbf{h} \in \{0,1\}^m$, and every quadratic term in the polynomial is composed of a variable from each set
\begin{equation}
\label{bi_qubo}
\sum_{i=1}^n a_iv_i + \sum_{j=1}^m b_jh_j + \sum_{i=1}^n\sum_{j=1}^m W_{ij}v_ih_j,
\end{equation}
where $\mbf{v}\in\{0,1\}^n$ and $\mbf{h}\in\{0,1\}^m$. The underlying bipartite graph is then $K_{n,m}$ for this bipartite QUBO problem.

For any given general QUBO instance (Eq.~(\ref{qubo})) with $n$ variables, we can always convert it into a $K_{n,n}$ bipartite QUBO instance, such that the optimal truth assignment of one set of variables, say $\mbf{v}$, corresponds to the optimal truth assignment of the original QUBO instance. We can let the bipartite QUBO instance be
\begin{equation}
\label{rbi_qubo}
\begin{split}
E(\mbf{v}, \mbf{h}) = & E_0(\mbf{v}, \mbf{h}) + C(\mbf{v},\mbf{h}) \\
= & \sum_{i=1}^n B_iv_i + \sum_{i=1}^n\sum_{j=i+1}^nQ_{ij}v_ih_j + C(\bf{v},\bf{h}),
\end{split}
\end{equation}
with $C(\mbf{v},\mbf{h})$ being some penalty function (in QUBO form) ensuring the invariance of the maximum under the bipartite conversion. Note that since both $E_0$ and $C$ are in QUBO form, the addition of the two constitutes a QUBO instance. 

The purpose of the penalty function $C$ is to ensure that $\mbf{v} = \mbf{h}$ is satisfied at the maximum of $E(\mbf{v},\mbf{h})$. This is done by constructing $C(\mbf{v}, \mbf{h})$ such that whenever an assignment deviates from the condition $\mbf{v} = \mbf{h}$, the function generates a cost large enough to overcome the increase in the value of the polynomial $E_0(\mbf{v}, \mbf{h})$ through the relaxation of the $\mbf{v} = \mbf{h}$ condition. This necessarily implies that the maximum of $E(\mbf{v}, \mbf{h})$ is the same as the maximum of $E(\mbf{v}, \mbf{v})$, with the latter being equivalent to the polynomial of the original QUBO instance (\ref{bi_qubo}) up to a constant offset. Therefore, the maximum of (\ref{bi_qubo}) is the same as the maximum of (\ref{rbi_qubo}), and the correctness of the reduction is guaranteed. An explicit construction of the penalty function $C(\mbf{v}, \mbf{h})$ is given in Appendix \ref{appendix:rbi}. 

\subsection{Conversion to $\{-1,1\}$ Binary Values}
\label{11}

In this work, it is convenient to restrict the binary variables of the optimization problem to be $\{{\bf v'},{\bf h'}\}\in \{-1,1\}^{n+m}$, so that the quadratic terms essentially describe Ising-type couplings between the variables. We shall, from here on, refer to $\mbf{v'}$ as {\it visible spins} and $\mbf{h'}$ as {\it hidden spins}, with the two names descending from the naming conventions of statistical mechanics and machine learning. 

The conversion from the old binary variables, $\{ \mbf{v}, \mbf{h} \} \in \{0,1\}^{n+m}$, to the new binary variables, $\{ \mbf{v'}, \mbf{h'} \} = \{-1,1\}^{n+m}$ values can be simply performed as follows
\begin{equation*}
\mathbf{v'}=-1+2\mathbf{v}, \qquad \mathbf{h'}=-1+2\mathbf{h}.
\end{equation*}
With this conversion, the original QUBO polynomial (see Eq.~(\ref{rbi_qubo})) can be written in terms of the new binary variables as
\begin{equation*}
\begin{split}
E(\mathbf{v'},\mathbf{h'}) &= \sum_{i=1}^n a_i \frac{v_i'+1}{2} + \sum_{j=1}^m b_j \frac{h_j'+1}{2} 
+ \sum_{i=1}^n\sum_{j=1}^m W_{ij}\frac{v_i'+1}{2}\frac{h_j'+1}{2} \\
&= \sum_{i=1}^n\frac{1}{2}(a_i+\sum_{j=1}^m W_{ij})v_i' 
+ \sum_{j=1}^m\frac{1}{2}(b_j + \sum_{i=1}^n W_{ij})h_j' + \sum_{i=1}^n\sum_{j=1}^m\frac{W_{ij}}{4}v'_ih'_j \\
&+ (\sum_{i=1}^n \frac{a_i}{2} + \sum_{j=1}^m \frac{b_j}{2} + \sum_{i=1}^n\sum_{j=1}^m\frac{W_{ij}}{4}).
\end{split}
\end{equation*}
We can define the new linear and quadratic coefficients to be $a_i'=\frac{1}{2}(a_i+\sum_j W_{ij})$, $b_j'=\frac{1}{2}(b_j+\sum_i W_{ij})$, and $W_{ij}'=\frac{W_{ij}}{4}$, respectively. Furthermore, we can choose to ignore the last bracketed term since it is just a constant offset independent of the $\{\mathbf{v'},\mathbf{h'}\}$. The polynomial can then be rewritten as
\begin{equation*}
E'(\mathbf{v'},\mathbf{h'}) = \sum_{i=1}^n a'_i v'_i + \sum_{j=1}^m b'_j h'_j + \sum_{i=1}^n\sum_{j=1}^m W'_{ij}v'_ih'_j,
\end{equation*}
which is in the same form as the original QUBO polynomial. 

From now on, we will discard the prime symbols on the coefficients and the binary variables in the polynomial expression, and we will always assume that the binary variables take values $\{-1,+1\}$. As it is the convention in the field of machine learning, the RBM energy is often equipped with a total negative sign, meaning that maximizing a QUBO polynomial is equivalent to minimizing the corresponding RBM energy
\begin{equation*}
E ( \mathbf{v},\mathbf{h}) = -\big( \sum_{ij} W_{ij} v_ih_j + \sum_i a_iv_i + \sum_j b_jh_j \big).
\end{equation*}
We will also follow this convention in this paper.

\subsection{Biases as Ghost Spins}
\label{ghost}

The linear coefficients in the RBM energy are referred to as {\it biases}. The bias terms can be interpreted as spins interacting with some external field. In some cases, it is convenient to express this interaction as a two-body interaction between a spin and some imaginary fixed spin, or {\it ghost spins} \cite{ghost} with a coupling strength proportional to the external field. 

To be more precise, we can define additional weight elements, $W_{i,m+1} = a_i$ and $W_{n+1,j} = b_j$, and two additional spins, $v_{n+1} = 1$ and $h_{n+1} = 1$, so that the RBM energy can be expressed compactly as
\begin{equation*}
E(\mathbf{v},\mathbf{h}) = -\sum_{i=1}^{n+1}\sum_{j=1}^{m+1} W_{ij}v_ih_j.
\end{equation*}
In this form, the linear terms are absorbed into the quadratic terms, resulting in a fully quadratic expression.

\subsection{Inverse Conversion}
\label{inv_conv}

In addition to converting a weighted Max-2-SAT instance into an RBM instance, the inverse conversion is also possible. Given an RBM instance, one way that the conversion can be performed is by breaking each Ising coupling up into two clauses, with the form of the two clauses dependent on the sign of the coupling. The coupling is broken up as follows
\begin{equation*}
W_{ij}v_ih_j \rightarrow
\begin{cases}
2W_{ij}( v_i \lor \neg h_j) \,\land\, 2W_{ij}( \neg v_i \lor h_j) \quad \text{if} \quad W_{ij} \geq 0, \\
(-2W_{ij})( v_i \lor h_j) \,\land\, (-2W_{ij})( \neg v_i \lor \neg h_j) \quad \text{if} \quad W_{ij} < 0, \\
\end{cases}
\end{equation*}
where $+1$ is interpreted as a true assignment and $-1$ is interpreted as a false assignment; the $\land$ operation can be interpreted as a plus sign. 

It can easily be verified from the RHS of the above conversion that when a bond is satisfied (say $v_i = h_j$ when $W_{ij} > 0$), then both clauses will be satisfied, and if the bond is violated (say $v_i \neq h_j$ when $W_{ij} > 0$), then only one clause will be satisfied. This results in an energy penalty of $2 |W_{ij}|$ (corresponding to the violation of one clause) whenever a bond is violated, and this agrees with the left-hand side (LHS) expression.

If we consider an $n\times m$ RBM with a weight assigned for every pair of visible and hidden spins, then there are clearly $nm$ bonds, which break up into $2nm$ clauses in this conversion scheme. The clause density of the weighted MAX-2-SAT is then
\begin{equation*}
\rho = \frac{2nm}{n+m}.
\end{equation*}
If $m$ scales linearly with $n$, then the clause density is clearly of order $O(n)$, which is extensive with respect to the system size. In this regime of clause density, it is generally difficult to generate planted instances of sufficient hardness if the clauses are formed in a completely random fashion\footnote{In the context of $k$-SAT, this means that we are forming the boolean formula by uniformly sampling clauses agreeing with the planted solution. In the context of RBM instances, this means that the frustrated loops are dropped on the bipartite graph randomly (see Section \ref{limit}).} \cite{poly_3sat}. In Section \ref{Section_struc}, we address this problem by presenting an algorithm that enforces certain weight structures on the RBM, which results in hard instances at high clause density.

\section{Frustration of RBMs}\label{frustration}

In this section we formulate an RBM instance entirely in terms of its corresponding {\it weight matrix}, and express the action of a spin flip as {\it vertex switching} \cite{switch}, which is defined as the negation of the signs of a certain subset of weight elements. We then introduce a measure of the hardness of an RBM instance known as the {\it frustration index} \cite{Mezard, frus} and discuss its relationship to the population of local minima of the RBM instance. For simplicity, we assume from here on that the RBM is {\it unbiased} \cite{rbm} (unless specifically mentioned), meaning that we can set $\mathbf{a}=0$ and $\mathbf{b}=0$. Note that most of the results derived in this paper can be easily generalized to a {\it biased} RBM. 

\subsection{Vertex Switching as Local Gauge}
\label{v_switch}

Given any spin configuration $\mathbf{s}=\{\mathbf{v},\mathbf{h}\}$ of the RBM, we can negate the signs of (or flip) a portion of the spins and arrive at some new configuration $\mathbf{s'}=\{\mathbf{v'},\mathbf{h'}\}$. This is equivalent to negating a certain subset of the weight elements. We formally define the vertex switching operation as follows.

\begin{definition}
Given an $n\times m$ RBM with weight matrix $\mbf{W}$ and two states, $\mbf{s} = ( \mbf{v}, \mbf{h} )$ and $\mbf{s'} = ( \mbf{v'}, \mbf{h'} )$, we define the vertex switching operation, $G_{\mbf{s}\mbf{s'}}: \mbb{R}^{n\times m} \mapsto \mbb{R}^{n\times m}$, on the weight matrix,
\begin{equation*}
G_{\mbf{s} \mbf{s'}}(\mbf{W}) = \mbf{W'},
\end{equation*}
such that for $\forall i,j$,
\begin{equation*}
W_{ij}v'_i h'_j = W'_{ij}v_ih_j.
\end{equation*}
\end{definition}

\begin{remark}
It is easy to verify that the switching operation is symmetric with respect to its two subscripts, or
\begin{equation*}
G_{\mbf{s}\mbf{s'}} = G_{\mbf{s'}\mbf{s}}.
\end{equation*}
Furthermore, the operations form an abelian group action on $\mbf{W}$, or more specifically,
\begin{equation*}
\forall \mbf{s}, \mbf{s'}, \mbf{s''}, \quad 
G_{\mbf{s}\mbf{s''}} = G_{\mbf{s}\mbf{s'}}G_{\mbf{s'}\mbf{s''}} = G_{\mbf{s'}\mbf{s''}}G_{\mbf{s}\mbf{s'}}.
\end{equation*}

Given two states, $\mbf{s}$ and $\mbf{s'}$, it is convenient for us to refer to the set of matrix indices, $F = \{ (i,j) \cond v'_ih'_j = -v_ih_j \}$, as the switching subset from $\mbf{s}$ to $\mbf{s'}$. An alternative construction of $F$ is given in Appendix \ref{appendix:metric_proof}. The vertex switching operation on each weight element can then be defined uniquely as
\begin{equation*}
G_{\mbf{s}\mbf{s'}}(W_{ij}) = 
\begin{cases}
-W_{ij} \quad &\text{if} \quad (i,j) \in F, \\
W_{ij} &\text{if} \quad (i,j) \notin F^c,
\end{cases}
\end{equation*}
where $F^c = \{ (i,j) \cond (i,j) \notin F \}$. In some sense, the switching subset $F(\mbf{s},\mbf{s'})$ defines the ``transition" from state $\mbf{s}$ to state $\mbf{s'}$ (or the inverse transition). A visual representation of the switching subset is given in Fig. \ref{flip_fig}. \\
\end{remark}

Given $\mbf{s}$ and $\mbf{s'}$, we let $\mbf{W'} = G_{\mbf{s}\mbf{s'}}(\mbf{W})$ be the transformed weight matrix under the vertex switching operation. If we denote $E$ as the energy function of the RBM with weights $\mbf{W}$, and $E'$ as the energy function with weights $\mbf{W'}$, then it is clear that $E(\mbf{s'}) = E'(\mbf{s})$. Note that the energy difference between the two states is given as
\begin{equation}
\label{eng_diff}
E(\mbf{s'}) - E(\mbf{s}) 
= E'(\mbf{s}) - E(\mbf{s})
= 2\sum_{F(\mbf{s}, \mbf{s'})} W_{ij}v_ih_j,
\end{equation}
noting the weight elements are summed over the switching subset $F(\mbf{s}, \mbf{s'})$. 

\subsection{Gauge Fixing the RBM}
\label{gauged_rbm}

In this work, it is convenient to let $\mbf{s}$ be the ground state of the RBM and $\mbf{s'} = \mbf{+1}$, then we refer to $\mbf{W'} = G_{\mbf{s}\mbf{1}}(\mbf{W})$ as the corresponding weight matrix that has been {\it gauge fixed} \cite{lgt}, such that $E'(\mbf{+1}) = E(\mbf{s})$ is the ground state energy. We shall refer to RBM with weights $\mbf{W'}$ that has been gauged fixed as a {\it gauged} RBM; an equivalent definition of a gauged RBM is given as follows.

\begin{definition}
An RBM is said to be gauged if its ground state is $\mbf{+1}$.
\end{definition}

\begin{remark}
Note that given any state $\mbf{s}$ and weight matrix $\mbf{W}$ with ground state $\mbf{s_0}$, the state can always be expressed equivalently in terms of a gauged RBM. To see this, we first realize that
\begin{equation*}
G_{\mbf{1}\mbf{s}}(\mbf{W}) = G_{\mbf{s}\mbf{s_0}} \circ G_{\mbf{1}\mbf{s_0}}(\mbf{W}) = G_{\mbf{s}\mbf{s_0}}(\mbf{W'}),
\end{equation*}
where $\mbf{W'}$ is the gauged weight matrix. And in the gauged RBM, the state can be expressed as the transition from the ground state via the switching subset $F(\mbf{s_0}, \mbf{s})$. See Fig. \ref{gauge_fig} for a visual representation of how a $2 \times 2$ RBM is gauged.
\end{remark}

\begin{figure}
\begin{center}
\includegraphics[scale=0.6]{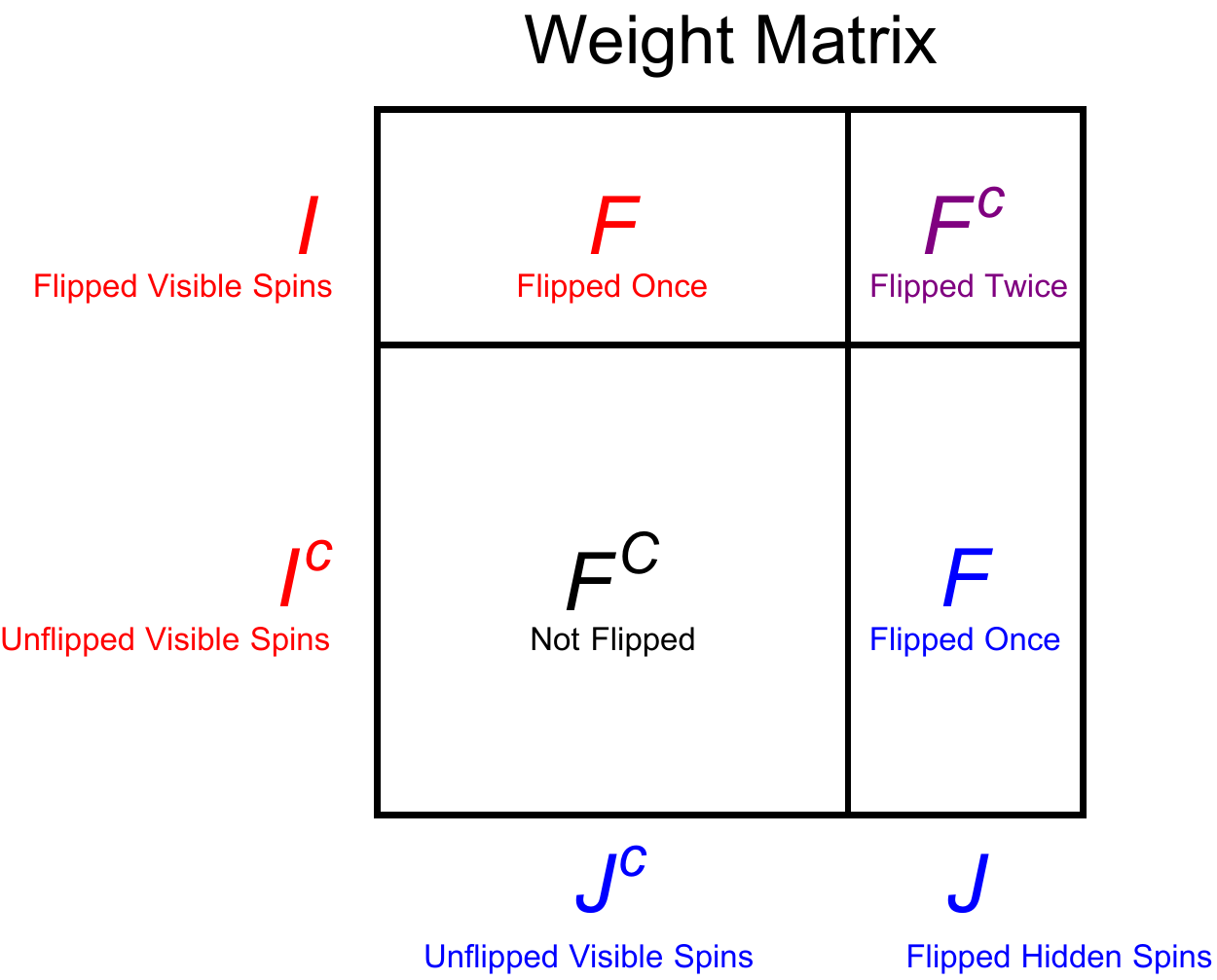}
\end{center}
\caption{Vertex switching illustrated in terms of the weight matrix. The elements in the lower-left block are not flipped. The elements in the upper-left and lower-right blocks are flipped once. The elements in the upper-right block are flipped twice, hence remain the same.}
\label{flip_fig}
\end{figure}

\begin{figure}
\begin{center}
\includegraphics[scale=0.5]{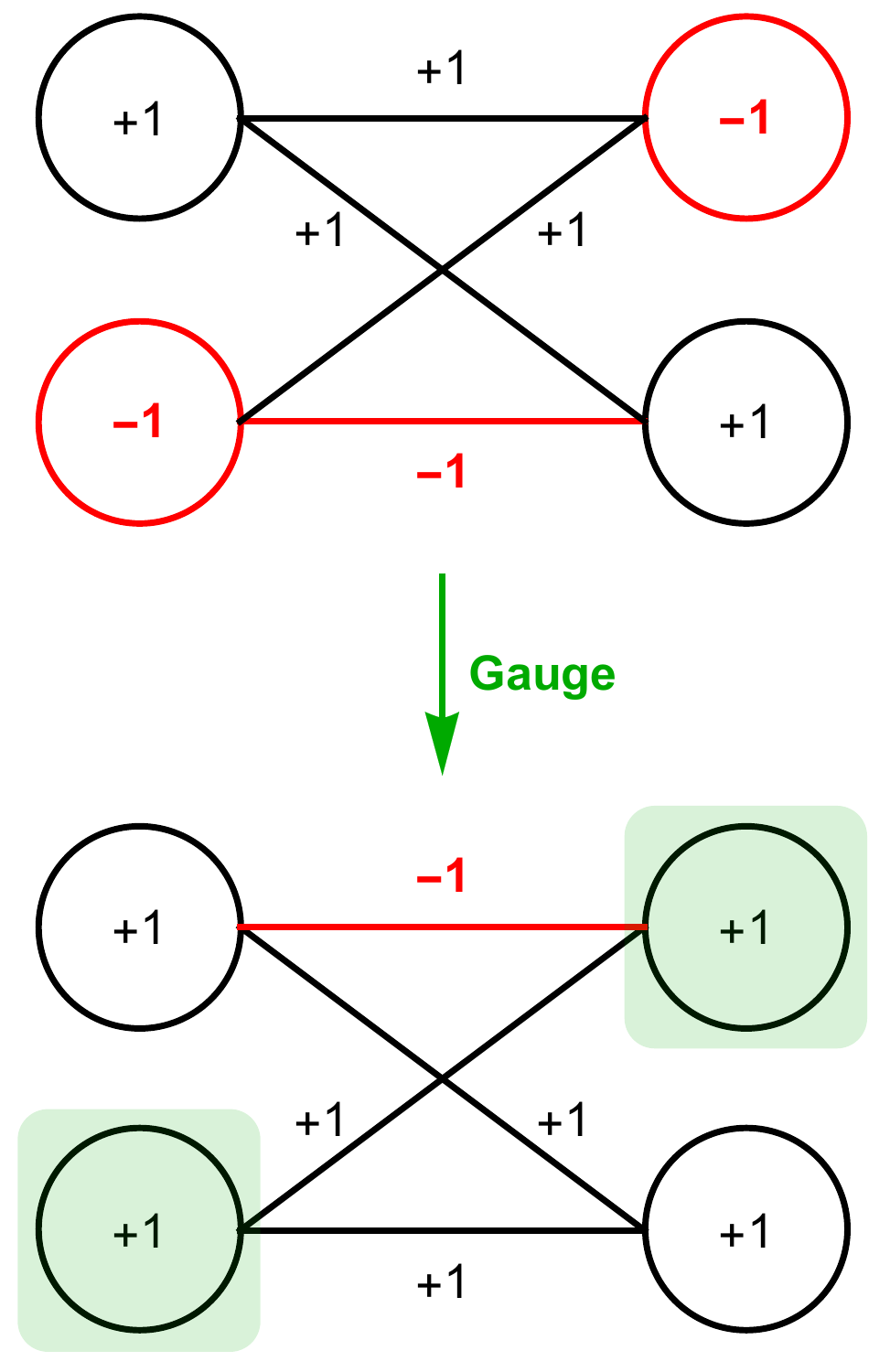}
\end{center}
\caption{Gauging a $2\times 2$ RBM; note that the new ground is the $\mbf{+1}$ state, and every edge with only one end connected to a flipped spin is negated. }
\label{gauge_fig}
\end{figure}

We continue the discussion on certain properties of a gauged RBM in this subsection. We first note that for a gauged RBM with weights $\mathbf{W}$, the ground state energy is simply the sum of all weights
\begin{equation*}
E(\mathbf{+1}) = -\sum_{ij} W_{ij}v_ih_j = -\sum_{ij}W_{ij}(+1)(+1) = -\sum_{ij}W_{ij}.
\end{equation*}
From Eq.~(\ref{eng_diff}) we see that any other state $\mathbf{s}$ can be simply expressed as
\begin{equation*}
E(\mathbf{s}) = E(\mathbf{+1})+2\sum_{F}W_{ij}.
\end{equation*}
We then see that the energy gap between an arbitrary state and the ground state is twice the sum of the weight elements in the corresponding switching subset. Note that for a gauged RBM, the sum of the weight elements of {\it any} switching subset must be positive, otherwise we obtain a configuration with a lower energy than the ground state configuration, thus contradicting the definition of a gauged RBM. We call this the {\it positive-sum condition}, and it can be used to check whether an RBM is gauged or not.

In practice, it is convenient to first generate the gauged RBM weight matrix by ensuring that the ground state is kept at $\mathbf{+1}$ (see Section \ref{frus_loop}), and then the weight matrix can be later processed to have any given spin configuration $\mathbf{s_0}$ as the ground state by simply taking the inverse gauge transformation. In fact, this procedure can be performed on any CSP problem on a general graph structure that possesses local gauge symmetry.

As a historical sidenote, this procedure is referred to as {\it gauge fixing} in statistical mechanics \cite{lgt}, which is common in the study of gauge fields. For the Ising model on a 2D lattice, there is a polynomial time algorithm for gauge fixing the lattice, as the problem can be formulated equivalently on the dual lattice as minimizing the $L1$ norms of strings \cite{disorder,barahona}. In fact, the problem of gauge fixing is in fact equivalent to finding the ground state energy of the Ising spin glass, and becomes an NP-hard problem for any non-planar graph \cite{istrail}. This means that the RBM cannot be gauge fixed efficiently unless the ground state configuration is known beforehand. In other words, we are essentially gauge fixing the RBM prior to generating the instance, and ensure that the gauge is not broken in the generation process. Instances resulting from such generation procedures are referred to as {\it planted} instances \cite{feldman}.

\subsection{Metric}
\label{metric}

It is generally useful to define a metric \cite{intro_topo} on the spin configuration space of the RBM, so we have a sense of how ``different" or ``far apart" two spin states are. A na\"ive choice would be to count the number of spins that are different between the two states, which is proportional to the $L_1$ distance or the so-called ``Manhattan distance'' \cite{taxi}. This metric is, however, not particularly insightful if the underlying graph structure is bipartite. To see this most clearly, we can flip all the spins and arrive at a state that is maximally far from the original state. However, since an unbiased RBM possesses $\mbb{Z}_2$ symmetry flipping all the spins will leave the energy invariant. Therefore, given two states, a more useful distance would be based on the corresponding switching subset, or $|F(\mbf{s}, \mbf{s'})|$.

If we denote the number of visible spins different between the two states as $n'$ and number of hidden spins different between the two states as $m'$, then we can write
\begin{equation*}
|F(\mbf{s},\mbf{s'})| = nm' + n'm - 2n'm'.
\end{equation*}
To normalize this quantity such that it ranges from $0$ to $1$, we define the distance as
\begin{equation*}
d(\mbf{s}, \mbf{s'}) = \frac{|F(\mbf{s},\mbf{s'})|}{nm}.
\end{equation*}
Note that $d$ generates a pseudometric space in which two states are distinguishable up to a global spin flip, which is a desirable property. A detailed discussion of this distance is given in Appendix \ref{appendix:metric_proof}.

To see the usefulness of this metric, we first assume all the weight elements, $\mathbf{W}$, and spins, $\mathbf{s}$, are iid (identically and independently distributed), where $W_{ij}$ is normal with mean $\mu$ and variance $\sigma^2$, and $v_i$ or $h_j$ has equal probabilities to be $+1$ or $-1$. We can then show that (see Appendix \ref{appendix:variance})
\begin{equation*}
\text{Var}_{ \{ \mbf{W},\mbf{s},\mbf{s'} \}}\big( E(\mathbf{s'})-E(\mathbf{s}) \bcond d(\mbf{s},\mbf{s'})=d \big) = 4nmd(\mu^2+\sigma^2),
\end{equation*}
so we see that the variance in the energy gap between two random states of a random RBM is proportional to the distance between the two states.

\subsection{Frustration}
\label{ffrus}

The {\it frustration index} of a weighted graph is defined at the ground state to be the ratio between the sum of the magnitudes of the unsatisfied bonds and the sum of the magnitudes of all the bonds \cite{frus}. For a gauged RBM, it is simply the ratio between the sum of the absolute values of all negative weight elements and the sum of the absolute values of all weight elements
\begin{equation}
\label{frus_eq}
f = \frac{-\sum_{W_{ij}<0}W_{ij}}{\sum |W_{ij}|} = \frac{1}{2}\frac{\sum |W_{ij}| - \sum W_{ij}}{\sum |W_{ij}|}.
\end{equation}
For a randomly generated RBM, the gaps of the low energy states are closely related to the frustration index, both in terms of distance and energy. To see this directly, we consider the gap between the ground state and the first excited state (the state with the second lowest energy). Clearly, the weight elements in the switching subset of the first excited state must produce the smallest (positive) sum out of all the switching subsets, and a switching subset with a greater distance is in general more entropically favored. With these two observations in mind, we now focus on the two regimes of the frustration index, small and large.

In the regime of small frustration, we can, for the sake of argument, assume that $f=0$, meaning that for a gauged RBM, all weights are non-negative. In this case, a farther distance from the ground state would imply that the cardinality of the switching subset is larger, corresponding to a larger number of weight elements to be summed over. And since all weights are non-negative, this would mean that it is less likely for the sum to be the smallest out of all switching subsets. In other words, the entropic favoritism of a farther distance is suppressed by the requirement of having the smallest sum, so the distance between the ground state and the first excited state is usually close for an RBM with small frustration.

In the regime of larger frustration, however, this suppression does not occur, which is due to the abundance of negative weights, making it likely for a switching subset of a large cardinality to still produce a small sum through the cancellation of positive and negative weights. It is then probable that the distance between the ground state and the first excited state to be far for a highly frustrated RBM. This is, in fact, a general feature of spin glasses with typical graph structures \cite{Mezard}.

\subsection{Maximum Frustration}

One interesting question (which as far as we know is open) is what the upper bound to the frustration of an RBM of a given size is. This question is not only of mathematical interest, but also of practical importance, since knowing the maximum frustration provides a reference for evaluating the hardness of the generated instances to the maximal hardness in terms of the frustration index.

If we consider the RBM instance to be the underlying bipartite graph of an arbitrarily large quasirandom graph \cite{quasi} (such as a large Paley graph \cite{paley}), then it can be shown that there is no constant upper bound to the frustration index except for the trivial $0.5$\footnote{If the frustration index is larger than $0.5$, then the sum of all weight elements of the gauged RBM must be negative. This means that we can take the switching subset to be the entire matrix, and break the positive-sum condition.}. This can be shown as a corollary of the fourth theorem stated and proved in Chung's paper \cite{quasi}, which we do not state here as it is not directly relevant to our work.

Note that the argument holds only if the RBM is assumed to be arbitrarily large, which is generally unrealistic in the context of training an RBM. In Appendix \ref{appendix:2m}, we then consider the special case of a $2 \times m$ RBM, and show that the frustration index is bounded above by $0.25$, which is the maximum frustration that the loop algorithm can generate (see Section \ref{atom}).

\subsection{Local Minima}

For the purpose of this work, we define a {\it local minimum} to be an RBM state which does not yield a lower energy by flipping any one spin. This means that if a local optimization algorithm (which operates through single spin-flips) were to arrive at this state, it will essentially become ``stuck" without the aid of any stochasticity. Therefore, we see that the population of local minima of a generated instance is an important characterization of its hardness, as it relates to the likeliness for a local optimizer descending in its energy landscape to be trapped, thus failing to discover the ground state (or the global minimum). 

In terms of the weight matrix of an RBM, it is easy to see that if we gauge a local minimum to be the $\mbf{+1}$ state, then the sum of any row or any column of weight elements must be non-negative, which is a weaker condition than the positive sum condition for the global minimum (see Section \ref{gauged_rbm}). This is because the ground state must be a local minimum, but a local minimum is not necessarily the ground state.

\section{Random Frustrated-Loop Algorithm}
\label{frus_loop_algo}

The frustrated-loop algorithm in its original form was proposed to generate spin-glass problems with known solutions to test the performance of the D-Wave quantum annealer \cite{loop}. Although the algorithm was initially proposed to be implemented on a 3D square lattice, it can be easily generalized to any graph as long as it is not acyclic \cite{graph}. For the purpose of this paper, we focus on the case of a complete weighted bipartite graph \cite{graph}, which, as shown in Section \ref{bi_conv}, is general enough to represent a weighted MAX-2-SAT problem with any underlying graph structure. 

Due to the non-local connectivity of a complete bipartite graph, the algorithm in its original form is unable to generate sufficiently hard instances at high loop densities (see Section \ref{limit}), so we propose a modified version of this algorithm (see Section \ref{modi_algo}) that will be more suitable for the generating hard instances on a complete bipartite graph. Before doing that, however, we need to define a few concepts and recall how the original frustrated-loop algorithm is implemented \cite{loop}. 

\subsection{Cycle}

In the language of graph theory, we can define a ``loop" simply as a closed path on a graph with non-repeating edges or vertices (a cycle) \cite{graph}. On a bipartite graph, the length of a loop must clearly be even, which we can denote as $2l$ (where $l \geq 2$ is a positive integer). We again resort to using the terminologies of RBM, and refer to vertices in one set as {\it visible nodes} and vertices in the other set as {\it hidden nodes}. 

To generate a random loop on an RBM, we start from a random visible node $i_1$, then ``walk" to a random hidden node $j_1$, then return to the visible layer on another random visible node $i_2$, and so on. Note that for every iteration, the node selected must not be already within the path, and the last iteration is a walk from $j_l$ back to the starting node, $i_1$, to ``close the loop". This cycle can be compactly expressed as
\begin{equation*}
i_1 - j_1 - i_2 - j_2 - ... - i_l - j_l - i_1.
\end{equation*}

From an algorithmic standpoint, generating this loop involves very little computational overhead as we can simply select $l$ nodes from the visible layer and $l$ nodes from the hidden layer in some random order, and simply connect them based on the order that they are chosen. Each edge in the loop can be assigned a weight as it corresponds to a two-body interaction between some visible spin and a hidden spin.

\subsection{Frustrated Loop}
\label{frus_loop}

We provide here a brief discussion of the random frustrated-loop algorithm in its original form \cite{loop}. The purpose of the algorithm is to generate an RBM instance with the ground state being $\mathbf{s} = \mathbf{+1}$. Trivially, one can set all the weights to positive. However, as discussed previously, this will result in an instance with zero frustration, or a ferromagnetic instance, and it will be extremely easy to solve. To make the instance non-trivial, we have to intentionally introduce negative weights in such a way such the ground state configuration $\mathbf{+1}$ is kept invariant, so we do not lose track of the planted solution.

One way of doing so is to generate a loop of length $2l$ and set all the edge weights in the loop to $+1$ except for a single edge weight which we set to $-1$. Then it can be checked that the ground state energy of this RBM subsystem is $E=-(2l-1)+1=-2(l-1)$, with a $2l$-fold degeneracy each corresponding to one of the $2l$ bonds to violate in this loop, with $\mbf{+1}$ being one of the ground state violating the negative weight. 

Now, we can drop multiple loops on the graph and ``sum" them together, meaning that when an edge is shared by multiple loops, we sum the weight contributions from the multiple loops at that given edge. And if an edge is not a part of any loop, then we set the corresponding weight to $0$. It is shown that this procedure leaves the ground state $\mathbf{+1}$ invariant \cite{loop}. 

Note that there is no obvious way to directly control the frustration index of an instance generated this way. As we shall see in later subsections the frustration index is affected by the lengths of the loops, and the way that the loops intersect. To address this problem, we can fix the loop length to be the smallest possible value, so that the frustration contribution of a single loop is fixed (see Section \ref{atom}). Second, we relax the condition that the negative weight in the loop has to be $-1$, which enables direct tuning of the frustration index (see Section \ref{tune}). Lastly, we prohibit the negative weights from overlapping with the positive weights, so that the frustration index does not decrease with increasing loop density (see Section \ref{int}). 

\subsection{Loop Atom}
\label{atom}

In the original algorithm, only loops above a certain length are kept \cite{loop}, because it is argued that small loops may lead to excessively difficult instances \cite{loop}. The reason for this is because smaller loops contribute a larger frustration. To see this, consider a loop of length $2l$, then we see that the frustration index of this subsystem is given as (see Eq.~(\ref{frus_eq}))
\begin{equation*}
f = \frac{1}{2} \Big( 1-\frac{2(l-1)}{2l} \Big) = \frac{1}{2l},
\end{equation*}
implying that the frustration index scales inversely with the loop length. To keep the frustration index in control, we then fix the lengths of all loops to the smallest value, which is $4$. The instances generated by these small loops are then expected to be hard, so we allow the hardness to be tuned down by altering the magnitude of the negative weight (see Section \ref{tune}). 

From here on, we refer to a frustrated loop of the smallest length as a {\it loop atom}. Note that we are not sacrificing any generality by restricting the length of the loops to $4$, as a frustrated loop of any given length can be decomposed as the sum of loop atoms. To see this, we consider a loop of length $2l$ which we wish to decompose, which we denote as $i_1 - j_1 - ... - j_l - i_1$. Without loss of generality (WLOG), we assume that the edge with the negative weight is $j_l - i_1$. Now, let a smaller loop of length $2(l-1)$ be $i_1 - j_1 - ... - j_{l-1} - i_1$, with the negative edge being $j_{l-1} - i_1$, and let a loop atom be $i_1 - j_{l-1} - i_l - j_l - i_1$ with the negative edge being $j_l - i_1$. Note that the two loops intersect at $i_1 - j_{l-1}$, with the contributions from the two loops canceling out, resulting in the loop $i_1 - j_1 - ... - j_l - i_1$ with the negative edge being $j_1 - i_1$, which is simply the original loop that we wished to decompose. We can then repeat the decomposition on the length $2(l-1)$ loop into a loop of length $2(l-2)$ plus a loop atom, and reiterate this procedure until the length of the loop shrinks to $4$. Thus, we see that a frustrated loop of length $2l$ can be decomposed into $l-1$ loop atoms.

We then see that the loop atom is general enough to produce a frustrated loop of any given length, and from here on, we shall always assume that an RBM instance is generated only with loop atoms. 

\begin{figure}
\begin{center}
\includegraphics[scale=0.5]{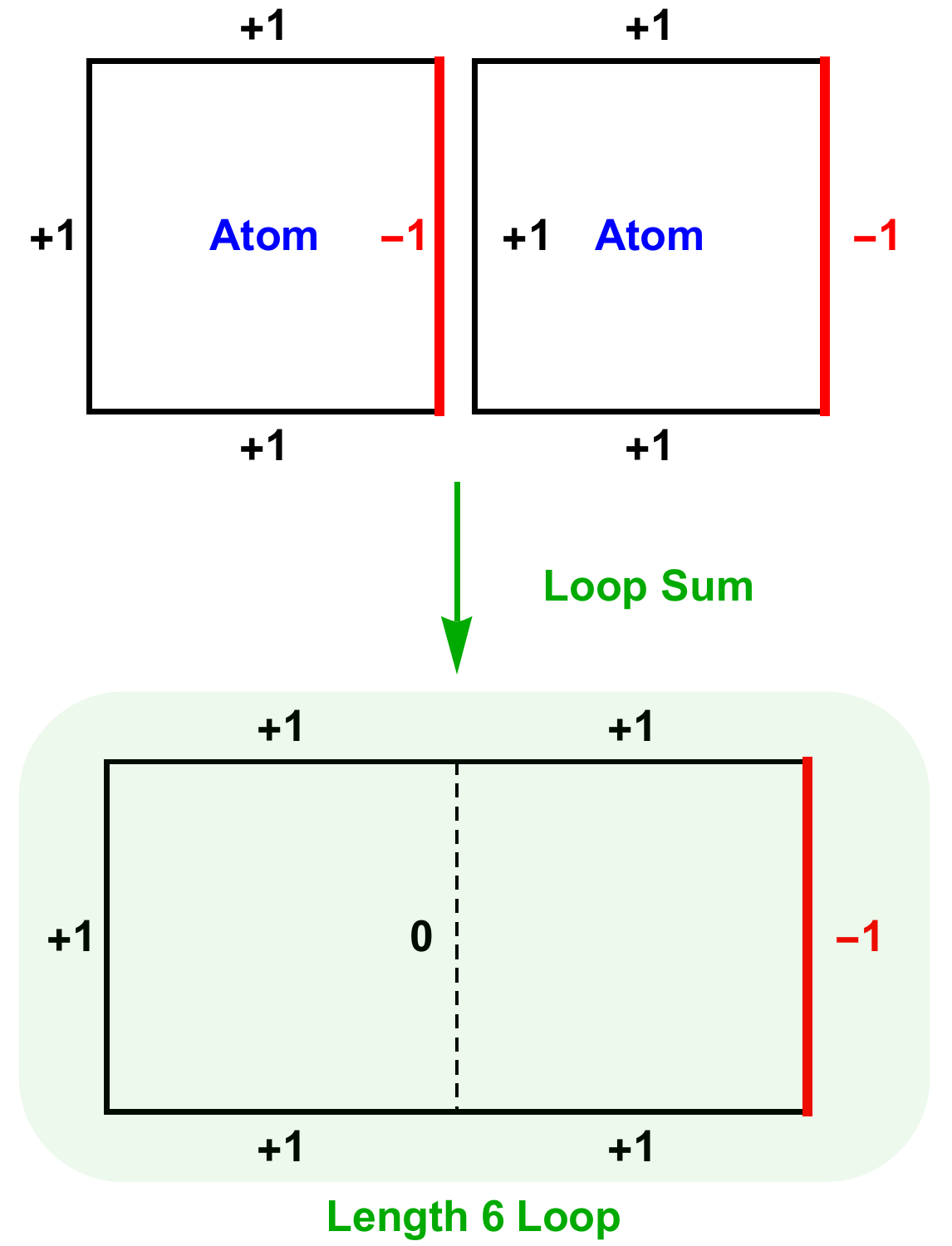}
\caption{Summing two-loop atoms results in a larger frustrated loop of length six. Note that the frustrated edge from the left loop and the positive edge from the right loop cancel out.}
\end{center}
\end{figure}

\subsection{Tunable Frustration}
\label{tune}

Recall that the frustration index of a single loop atom is $0.25$, which may give rise to excessively difficult instances. To allow for tunability of the frustration index, we can relax the condition that negative edge weight in the loop atom must be $-1$. Instead, we can set the negative weight to be $-\alpha \in [-1,0]$, which guarantees the satisfaction of the positive sum condition (see Section \ref{gauged_rbm}). The frustration index contributed by this loop is then given as
\begin{equation*}
f = \frac{\alpha}{3+\alpha},
\end{equation*}
with $f$ increasing from $0$ to $0.25$ as we increase $\alpha$ from $0$ to $1$. The ground state energy of an instance generated by $N$ of such loops is then $-N(3-\alpha)$, which we can use to verify the correctness of the solver.

\subsection{Intersection}
\label{int}

Often times, two or more loops will ``intersect", meaning that they will share one or more edges. They can either intersect constructively, meaning that all the weight contributions at the intersected edge are of the same sign, or intersect destructively, meaning that not all weight contributions are of the same sign. If we assume that the negative weight of the loop atom is $-1$, then each loop contributes $-(1+1+1-1)=-2$ units of energy to the ground state energy, so if we have $N$ such loop atoms, then the ground state energy is simply $-2N$. Therefore, the expression for the frustration index in Eq.~(\ref{frus_eq}) can be reduced to
\begin{equation*}
f = \frac{1}{2} - \frac{N}{\sum |W|}.
\end{equation*}
If none of the loops intersect or if the they only intersect constructively, then it is easy to see that $\sum |W| = 4N$, resulting in a frustration index of $f=0.25$, or the maximum frustration that one can generate using loop atoms.

Although constructive intersections do not change the frustration index, destructive intersections decrease the frustration index. From here on, the term ``intersection" is used to refer solely to destruction intersection. To see how the frustration index is affected by intersections, we consider the scenario where we drop a new loop atom randomly on a graph with existing loops, then whenever an edge weight of the new loop opposes the sign of the existing weight on that edge, we denote this occurrence as an {\it intersection event}. 

If we denote the number of intersection events on the RBM as $N_{\times}$, then the frustration index is given as
\begin{equation*}
f = \frac{1}{2}(1-\frac{N}{2N - N_{\times}}).
\end{equation*}
From Appendix \ref{appendix:poisson}, we see that the expected number of intersection events, $\E(N_{\times})$, starts from $0$ and approaches $N$ asymptotically from below as the number of loops increases. If we were to approximate the expected value of the frustration index as
\begin{equation*}
\mathbf{E}(f) = \frac{1}{2}(1 - \frac{N}{2N - \E(N_{\times})}),
\end{equation*}
then it is clear that the frustration index starts from $0.25$ and approaches $0$ as $N$ increases, meaning that the frustration index decreases with increasing density of loops. 

Intuitively, this makes sense. If the loop density is small, then it is unlikely that the loops will intersect, so the frustration will remain at $0.25$. As we increase the loop density, intersection events become more common, and the frustration index will decrease due to cancellation of positive and negative weights. If the loop density is sufficiently high, then the law of large numbers will guarantee that all weights will be positive as positive contributions are three times as dominant than negative contributions, and the frustration index will become zero. In short, the frustration index is inversely correlated with the loop density in a random manner, so to prevent any decay and uncertainty in the frustration index, it is necessary that we prohibit any form of destructive intersections.

\subsection{Generality}
\label{gene}

An interesting question regarding the generality of the loop algorithm is whether it is possible to generate any RBM instance (which WLOG we here assume to be  gauged) of frustration $f\leq 0.25$ with only loop atoms\footnote{Note that it is inconsequential whether the negative weight of the loop atom $-\alpha$ is tunable or fixed at $-1$, as a tunable loop atom can always be expressed as some conical combination of four loop atoms with $-1$ negative weight.}. Before we extend on this discussion, we first point out that it is trivially possible to generate any ferromagnetic instance (with all weights being positive) using loop atoms, so the question can be formulated alternatively as whether any gauged RBM matrix with $f\leq 0.25$ can be expressed as a conical combination of loop atoms and a non-negative matrix. A conical combination is essentially a linear combination with all the coefficients being non-negative \cite{convex}, where the non-negative coefficient condition is crucial here because we cannot ``negate" the sign of a loop atom as it will not guarantee invariance of the planted solution.

We now make the statement of this problem more concise. We can consider the weight matrix $\mathbf{w}$ of an unbiased $n\times m$ RBM to be an $nm$ dimensional vector. A loop atom $\mathbf{l}_c$ can also be considered a vector in the same vector space, with most of its components being zero. It can be checked that there are $k=4{n\choose 2}{m\choose 2}$ possible loop atoms. Then the problem becomes whether we can find a set of $K$ non-negative numbers $\{x_1,x_2,...,x_k\}$ such that the following is true
\begin{equation*}
\sum_{k=1}^{K} x_k \mathbf{l}_k
\leq \mathbf{w}.
\end{equation*}
If we denote $\mathbf{L}=[\mathbf{l_1}, \mathbf{l_2}, ... ,\mathbf{l_k}]$, then the question can be made even more concise:
\begin{equation}
\label{f1}
\text{Does the system } \mathbf{L}\mathbf{x}\leq \mathbf{w} \text{ have a solution with } \mathbf{x}\geq 0 \text{ ?}
\end{equation}

This is a system of inequalities, with its dual problem given as follows \cite{linear}:
\begin{equation}
\label{f2}
\begin{split}
&\text{Does the system } \mathbf{L}^T\mathbf{y}\geq 0 \text{ have a solution with } \\
&\mathbf{w}^T\mathbf{y}<0 \text{ and } \mathbf{y}\geq 0?
\end{split}
\end{equation}
Exactly one of the two statements, (\ref{f1}) and (\ref{f2}) above, can be true at a given instance, so proving statement (\ref{f1}) true is equivalent to proving statement (\ref{f2}) false. In Appendix \ref{appendix:gen}, we explicitly show that statement (\ref{f2}) is false for a $2\times 3$ RBM. The result can be easily generalized to a $2\times m$ RBM. The general case of this question for an $n\times m$ RBM is (as far as we know) still open.

\subsection{Biases}

In Section \ref{ghost}, we showed that any bias term can be expressed as an interaction between a spin and some fixed spin $v_{n+1}$ or $h_{m+1}$. In previous sections, we limited our focus to only unbiased RBM instances, so the frustration contribution of the loop atom is restricted by the positive sum condition, under the assumption that all four spins in the loop atom can be freely flipped, generating $8$ possible switching subsets (up to a $\mbb{Z}_2$ symmetry). However, if the RBM instance is biased and the loop atom contains one or more fixed spins, then the number of switching subsets decreases, which relaxes the positive sum condition slightly, and this will result in a possible increase in the upper bound of the frustration contribution of the loop atom. 

If the loop atom contains one fixed spin, which WLOG we assume to be $h_{m+1}$, then we can write the loop atom as $i_1-j_1-i_2-(m+1)-i_1$, with the two visible biases generated by this loop being $W_{i_1,m+1}$ and $W_{i_2,m+1}$. At first glance, the spin $h_{m+1}$ being held fixed seems to imply that the number of switching subsets decreases. This observation is however not true as the spin $h_{m+1}$ can be flipped indirectly (up to a $\mbb{Z}_2$ symmetry) through simultaneously flipping the spins $v_{i_1}$, $v_{i_2}$ and $h_{j_1}$. Therefore, the restrictions on the weights of the loop will not change (see Section \ref{tune}), and the frustration contribution is still bounded above by $0.25$.

If the loop atom contains two fixed spins, we can express the loop atom as $i_1-(m+1)-(n+1)-j_1-i_1$, which generates a visible bias, $W_{i_1,m+1}$, and a hidden bias, $W_{n+1,j_1}$. The element $W_{n+1,m+1}$ denotes a constant offset (independent of the spin states) and can be thus disregarded. In this case, disregarding the null switching subset, we have only $2^2-1=3$ possible switching subsets, and a possible weight assignment that satisfies the ``positive-sum condition" can be $W_{i_1,m+1}=W_{n+1,j_1}=1$ and $W_{i_1,j_1}=-\alpha$ for $0\leq \alpha\leq 1$. The maximum frustration contribution of this loop atom (ignoring the weight $W_{n+1,m+1}$) is then
\begin{equation*}
f_{max} = \max_{\alpha}\Big( \frac{\alpha}{1+1+\alpha} \Big) = \frac{1}{3},
\end{equation*}
which is greater than the previous upper bound of $0.25$. This relaxed upper bound for the frustration contribution of a loop atom to a biased RBM instance can be exploited to generate instances with greater variation in hardness. 

\subsection{Extension to General Graphs}
\label{ext}

Our choice to study the frustrated-loop algorithm on a bipartite graph is due to the ease of theoretical analysis, and its direct application to RBM pre-training (see Section \ref{app}). However, the algorithm can be easily applied to any connected graph that is not acyclic \cite{graph}. One simply has to detect a sufficient number of random cycles on the graph, and generate a frustrated loop on each cycle by setting one of its edges to $-1$ and the rest to $+1$. 

An efficient way for finding all the cycles in a graph is well known. We first begin by finding a cycle basis of the graph, or the minimal set of cycles from which all cycles can be generated through the symmetric difference operation \cite{boo}. The standard way to find a cycle basis is from the spanning tree of the graph, and many refined algorithms already exist for this purpose \cite{span,cycle_1,cycle_2,cycle_3}. After finding the cycle basis, we then take the symmetric difference between two or more randomly selected basis cycles to generate a new random cycle \cite{enum}.

A cycle on a general graph structure can, in general, have any length greater than $3$. Note that a length-3 loop has a frustration contribution of $\frac{1}{3}$ (in contrast to the maximal frustration of $\frac{1}{4}$ for a bipartite graph), meaning that it is possible to generate instances of even higher frustration on a general graph than on a bipartite graph.

In terms of machine learning, this means that the frustrated loop algorithm can be applied to a variety of neural network structures. For example, it can be applied to a deep neural network \cite{deep} which can be described as a $k$-partite graph, or a fully connected Boltzmann machine which can be described by a complete graph \cite{fully_boltz}.

\subsection{Algorithm}
\label{secfrusalgo}

A simple version of the frustrated-loop algorithm pseudocode is given in Algorithm \ref{frusalgo}. The code allows for the basic functionality of independent tuning of the frustration index and the loop density. As reasoned in Section \ref{frus_loop}, this algorithm prohibits destructive interference events and only uses loop atoms with the negative weight of the loop being tunable. 

Note that variations on the code can be made, depending on the purpose of the test. Some examples are: the edge weights can be made normal random variables with small standard deviations to introduce more randomness; constructive interference can be also prohibited to have more consistent testing results; the bias terms can be intentionally made larger to generate more difficult instances. We only show the basic version here to avoid unnecessary complications.

\begin{algorithm}[H]
\caption{Random Frustrated-Loop Algorithm}
\label{frusalgo}
\begin{algorithmic}[1]

\State $\text{Initialize an empty } n\times m \text{ matrix } \mathbf{W}$
\State $\alpha = 3f/(1-f)$
\For{$iteration \in [[1,N_{loops}]]$} 
\State $\text{Choose a random column } j_1$
\State $\text{Choose two random rows } i_1,i_2 \text{ such that} \newline \hspace*{3em} W_{i_1j_1}\geq 0 \land W_{i_2j_1}\leq 0$
\State $\text{Choose another random column } j_2 \text{ such that } \newline \hspace*{3em} W_{i_1j_2}\geq 0 \land W_{i_2j_2}\geq 0$
\State $W_{i_1j_1}\gets W_{i_1j_1}+1,\, W_{i_2j_1}\gets W_{i_2j_1}-\alpha$
\State $W_{i_1j_2}\gets W_{i_1j_2}+1,\, W_{i_2j_2}\gets W_{i_2j_2}+1$
\EndFor
\State $\text{Generate a random state vector } \mathbf{s}\in\{-1,1\}^{n+m}$
\State $\text{Gauge } \mathbf{W} \text{ such that } \mathbf{s} \text{ is the lowest energy state}$

\end{algorithmic}
\end{algorithm}

The algorithm can be easily modified to generate MAX-2-SAT instances of uniform weights and tunable clause density. We begin by setting the parameter $\alpha = 1$ for all the loop atoms, and prohibit constructive and destructive intersections of the loops altogether. In this case, it is clear that the absolute values of all non-zero weights will be $1$, and if we convert the $n\times m$ RBM instance into a MAX-2-SAT problem through the procedure described in Section \ref{inv_conv}, we will obtain a MAX-2-SAT instance with uniform weights. Since the loops do not intersect, we see that the number of non-zero weights is exactly $4N$, where $N$ is the number of loops. The clause density of the corresponding MAX-2-SAT instance is then given as
\begin{equation*}
\rho = \frac{8N}{nm},
\end{equation*}
meaning that the clause density can be directly controlled by the number of loops.

\subsection{Limitation}
\label{limit}

Even though the frustrated-loop algorithm is able to generate maximally frustrated instances, this does not necessarily imply that the instances are sufficiently hard. This is due to the randomness of the distribution of negative weights on the weight matrix, making it unfavorable for the population of local minima. To be more specific, the expected value of each weight element for a maximally frustrated instance (setting $\alpha=1$ for all loop atoms) generated is $1/2$ with a standard deviation of $\sqrt{3}/2$, and if we were to find the sum of a large subset of weights corresponding to a given switching subset $F$, the sum is then $\frac{1}{2}|F|\pm \frac{\sqrt{3}}{2}\sqrt{|F|}$, and this value is most likely positive for a large $|F|$ as a result of the LLN (Law of Large Numbers). Therefore, the sum of the elements in any row or column (corresponding to a visible or hidden spin respectively) is positive w.h.p. (with high probability), meaning that local algorithms based on single spin-flip updates will most likely flip all the spins to the $\mbf{+1}$ ground state after a small number of sweeps, making the instance incredibly easy. In Appendix \ref{appendix:local_max}, we provide a formal discussion on the absence of local minima for instances generated by the random algorithm at high loop density in the limit of large system size. To generate sufficiently hard instances at high density, it is then necessary to enforce certain structures on the loop atoms, such that the weight distribution generated by the loop atoms is favorable for the population of local minima.

\section{Structured Frustrated-Loop Algorithm}
\label{modi_algo}

\begin{figure}
\begin{center}
\includegraphics[scale=0.8]{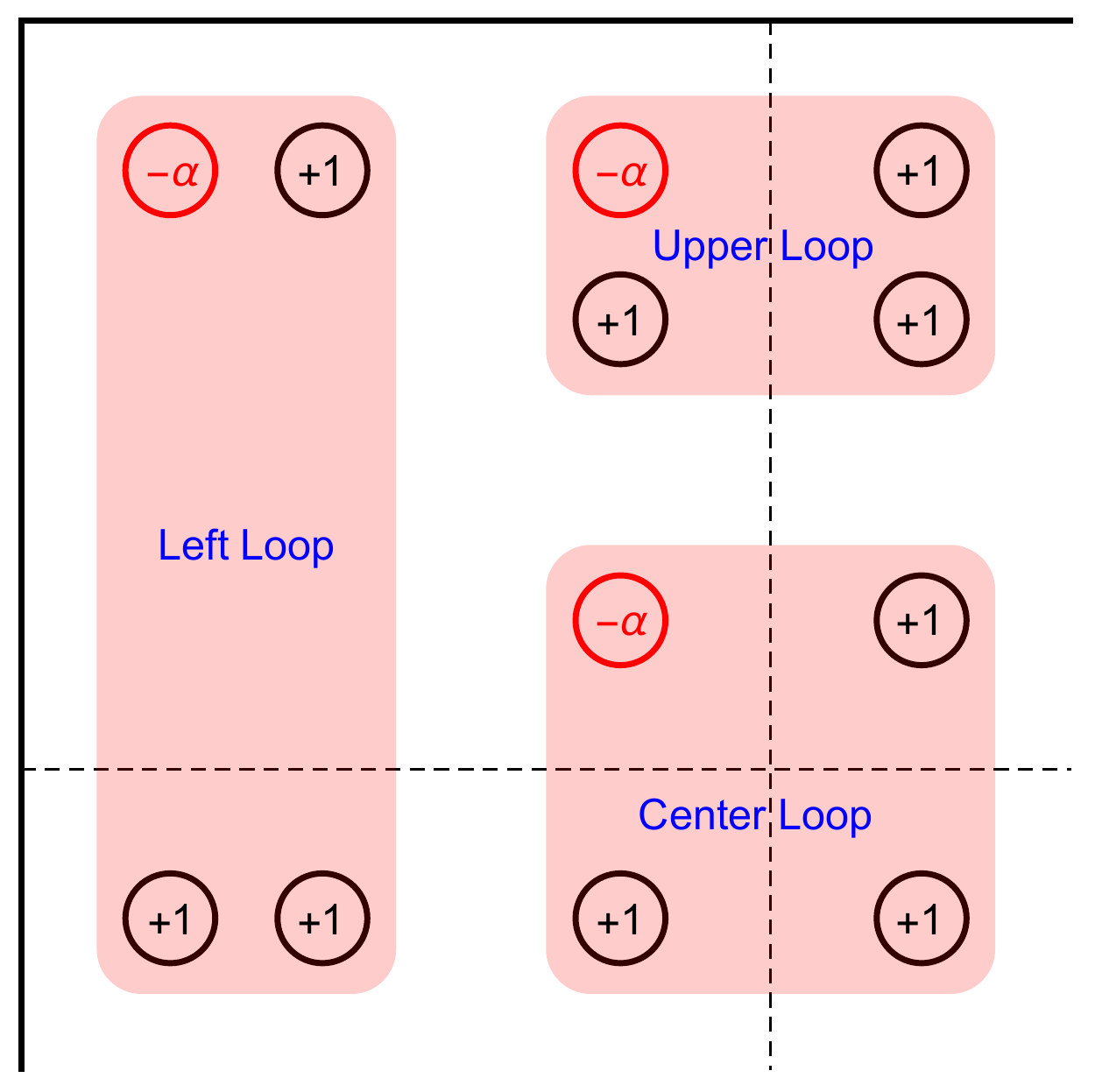}
\end{center}
\caption{The weight matrix is divided into four blocks by horizontal and vertical dashed lines. A left loop is on the left of the vertical line and crosses the horizontal line. An upper loop is above the horizontal line and crosses the vertical line. A center loop crosses the intersection of the two lines. Note that the negative weights are all located at the upper left block of the matrix.}
\label{mod}
\end{figure}

As mentioned in Section \ref{limit}, to generate sufficiently hard instances at high loop density, the loop atoms cannot be dropped on the weight matrix in a completely random fashion, otherwise local minima will fail to populate the energy landscape, resulting in instances that are incredibly easy. Therefore, we have to enforce certain conditions on the loop atoms for the generation of hard instances in the regime of high loop density.

\begin{algorithm}[H]
\caption{Structured Frustrated-Loop Algorithm}
\label{structurealg}
\begin{algorithmic}[2]

\State $\text{Initialize an empty } n \times m \text{ matrix } \mathbf{W}$
\State $\alpha = 3f/(1-f)$
\State $N_1 + N_2 + N_3 = N_{loops}$
\State $d \in (0,1]$
\State $n_1 = \lceil (n-1)d \rceil; \quad m_1 = \lceil (m-1)d \rceil$

\For{$iteration \in [[1,N_1]]$} 
\State $\text{Choose a random row } i_1 \text{ in } [[ \, 1, n_1 \, ]] \text{;}$
\State $\text{Choose a random row } i_2 \text{ in } [[ \, n_1+1 , n \, ]] \text{;}$
\State $\text{Choose two random columns } j_1,j_2 \text{ in } [[ \, 1, m_1 \, ]]$
\State $\text{such that:}$
\State $W_{i_1j_1}\leq 0 \text{, and } W_{i_1j_2}, W_{i_2j_1}, W_{i_2j_2}\geq 0$
\State $W_{i_1j_1}\gets W_{i_1j_1}-\alpha,\, W_{i_2j_1}\gets W_{i_2j_1}+1$
\State $W_{i_1j_2}\gets W_{i_1j_2}+1,\, W_{i_2j_2}\gets W_{i_2j_2}+1$
\EndFor

\For{$iteration \in [[1,N_2]]$} 
\State $\text{Choose two random rows } i_1,i_2 \text{ in } [[ \, 1 , n_1 \,]] \text{;}$
\State $\text{Choose a random column } j_1 \text{ in } [[ \, 1, m_1 \, ]] \text{;}$
\State $\text{Choose a random column } j_2 \text{ in } [[ \, m_1+1 , m \, ]]$
\State $\text{such that:}$
\State $\text{...}$
\EndFor

\For{$iteration \in [[1,N_3]]$} 
\State $\text{Choose a random row } i_1 \text{ in } [[ \, 1, n_1 ]] \text{;}$
\State $\text{Choose a random row } i_2 \text{ in } [[ \, n_1+1 , n \, ]] \text{;}$
\State $\text{Choose a random column } j_1 \text{ in } [[ \, 1 , m_1 \, ]] \text{;}$
\State $\text{Choose a random column } j_2 \text{ in } [[ \, m_1+1 , m \, ]]$
\State $\text{such that:}$
\State $\text{...}$
\EndFor

\State $\text{Generate a random state vector } \mathbf{s}\in\{-1,1\}^{n+m}$
\State $\text{Gauge } \mathbf{W} \text{ such that } \mathbf{s} \text{ is the lowest energy state}$

\end{algorithmic}
\end{algorithm}

\subsection{Algorithm}
\label{sec_slog}

We first present the algorithm (see Algorithm~\ref{structurealg}), followed by an explanation of the advantage of this algorithm over the random counterpart at high loop density. For the sake of consistency, we prohibit the loops from intersecting (destructively). We start by dividing the gauged RBM weight matrix into four blocks: the upper-left block $B_1$, the upper-right block $B_2$, the lower-left block $B_3$, and the lower-right block $B_4$. The sizes of the four blocks are parameterized by the factor $d \in (0,1]$, with the size of block $B_1$ given as
\begin{equation*}
\lceil\, (n-1)d \,\rceil \times \lceil\, (m-1)d \,\rceil,
\end{equation*}
and the sizes of the remaining blocks are in accordance with this. Note that this parameterization guarantees the non-triviality of the four blocks (meaning that no block is sized $0\times 0$). 

Note that a loop atom expressed on a weight matrix can be visualized as four elements that form the vertices of a rectangle. To be more specific, the cycle $i_1-j_1-i_2-j_2-i_1$ can be expressed as a weight matrix with the indices of its non-zero elements being $(i_1,j_1)$, $(i_1,j_2)$, $(i_2,j_1)$, and $(i_2,j_2)$, which can be connected to form a rectangle in the 2D Cartesian coordinate system. In this algorithm, we classify the loop atoms into the following three types (see Figure \ref{mod}):
\begin{itemize}
\item Left Loop: Two vertices of the loop atom must be in $B_1$, and the other two vertices must be in $B_3$.
\item Upper Loop: Two vertices of the loop atom must be in $B_1$, and the other two vertices must be in $B_2$.
\item Center Loop: Every block must contain a vertex of the loop atom.
\end{itemize}
Furthermore, we restrict all the negative weights to block $B_1$, which effectively ``concentrates" the negative weight elements into the upper left block. As we shall see shortly, this concentration of negative weights is favorable for generating hard instances. 

There are a couple of important points to make regarding the symmetry of this algorithm. Note that if all four blocks are of the same size, then the choice of which block to concentrate the negative weights to is completely arbitrary, since a matrix can always be permuted such that the upper-left block becomes any one of the four blocks, and it is even possible to ``spread" the negative weights throughout the matrix through permutation. Furthermore, if the negative weight of the loop atom is $-1$, then the parameterization of this algorithm with $d$ and $1-d$ are equivalent up to a gauge transformation via the switching subset $B_1 \cup B_4$. Under this transformation, all non-zero elements of $B_1$ become positive and those of $B_4$ become negative, meaning that the roles of $B_1$ and $B_4$ are effectively ``swapped". 

\subsubsection{Multiple Metastable Clusters}

As an important sidenote, we point out that under the structured loop algorithm, only one metastable cluster is guaranteed, with the remaining clusters arising as a result of the stochasticity of the generation method, and such clusters can not be controlled directly. In general, planting multiple metastable clusters is expected to be an incredibly difficult task itself, as it is almost equivalent to the problem of pre-training the RBM \cite{cd}, a task well-known to be intractable. 

To see this, we note that planting an RBM instances where multiple metastable clusters can be controlled is equivalent to constructing an RBM energy landscape such that the local minima of the RBM energy, $E(\mbf{v},\mbf{h})$, are realized at multiple fixed states. This is equivalent to performing a maximum likelihood estimation on the weights of the RBM such that certain states of the joint PMF, $p(\mbf{v},\mbf{h}) = e^{-E(\mbf{v},\mbf{h})}$, is maximized. If we were to plant metastable clusters near visible states coinciding with the data set, then the amplitudes of the marginal PMF, $p(\mbf{v})$, over the data set is also expected to be maximized, as the joint and marginal distribution of a random RBM is expected to be highly correlated. The last statement is highly non-trivial, and is explored in greater depth in another work \cite{mode_train}, where this marginal-joint correlation is utilized to efficiently pretrain an RBM.

\subsection{Degeneracy and Local Minima}
\label{degen}

Here, we consider loop atoms with three positive edges, $+1$, and one negative edge with weight $-\alpha \in [0,1]$ (see Section \ref{tune}). If we denote the numbers of left, upper, and center loops as $N_1$, $N_2$, and $N_3$ respectively, then it can be shown that
\begin{equation*}
\begin{split}
&\sum B_1 = (N_1+N_2)-\alpha (N_1+N_2+N_3), \\
&\sum B_2 = 2N_2 + N_3 \quad \sum B_3 = 2N_1 + N_3, \quad \sum B_4 = N_3.
\end{split}
\end{equation*}
If we choose $F=B_1\cup B_4$ to be the switching subset, then the sum of all its elements is
\begin{equation*}
\sum F = (1-\alpha)(N_1+N_2+N_3).
\end{equation*}
We see that if $\alpha =1$, then this sum is zero, meaning that the ground state is at least two-fold degenerate. In this case, we have two clusters of low-energy states, one concentrated near the planted ground state, and one concentrated distance $|F|/nm$ from the ground state, and the energy gap between the two clusters can be tuned through the parameter $\alpha$. We can then choose $\alpha$ to be a value arbitrarily close to $1$ (but not $1$ itself) so that the degeneracy of $F$ is broken slightly, which results in instances that are able to ``trap" a local solver in the cluster far away from the planted ground state. We shall refer to this cluster as the {\it metastable cluster}. 

In Appendix \ref{appendix:l_max}, we show that for sufficiently large number of center loops, $N_3$, the population of local minima is expected to be large. We also argue that for an instance class of sufficiently large frustration, increasing the size parameter, $d$, increases the distance between the two clusters, meaning that it is more difficult for the solver to escape the metastable cluster and find the planted solution. In the next Section (Section \ref{test}), we show empirically, using simulated annealing, that there is an increase in difficulty for the structured frustrated-loop algorithm over the random frustrated loop algorithm in the regime of high loop density and large frustration.

\subsection{Extension to General Graphs}

Similar to how a loop atom can be expressed on an RBM weight matrix as a rectangle, a frustrated loop of length 4 can be expressed also as a rectangle on the adjacency matrix \cite{graph} of a general graph. To be more specific, the cycle $i_1-i_2-i_3-i_4-i_1$ can be expressed as a rectangle with indexes $(i_1,i_2)$, $(i_2,i_3)$, $(i_3,i_4)$, and $(i_4,i_1)$. Since the graph is not necessarily bipartite, the visible-hidden ordering of the coordinate indices is not well-defined, meaning that the rectangle is not unique. Alternatively, the rectangle can also be $(i_1,i_4)$, $(i_4,i_3)$, $(i_3,i_2)$, and $(i_2,i_1)$, which is simply the ``transpose" of the first rectangle on the adjacency matrix. Therefore, to ensure the required symmetry of the adjacency matrix, we express the loop atom as the sum of the two rectangle representations. 

Note that similar to the RBM case, we can, WLOG, restrict all the negative elements in the top-left block of the matrix. With this in mind, we can check that the transpose of a left loop is simply an upper loop, and the transpose of a center loop is still a center loop, with the negative edge remaining invariantly in the top-left block. Therefore, we see that we can choose to ignore, WLOG, the upper loop, and generate half of the adjacency matrix only with left and center loops, then take the sum of the generated matrix and its transpose to form the full adjacency matrix. 

\section{Experimental Evaluation}
\label{test}

In this study, we focus on generating $n \times n$ RBM instances (equal number of visible and hidden spins) using the random-loop algorithm (see algorithm \ref{frusalgo}) and the structured-loop algorithm (see algorithm \ref{structurealg}). An RBM instance can be generated with three parameters: the size of the system $n$, the frustration index $f$ (see Section \ref{tune}), and the loop density $\rho$, where the loop density is defined as the ratio between the number of loops and the size of the system
\begin{equation*}
\rho = \frac{N}{n}.
\end{equation*}
To study the hardness of the generated instance, we use simulated annealing (SA) \cite{sa} as a powerful stochastic optimizer to solve the generated instances of different parameter triplets $\{n,f,\rho\}$, and we record the number of sweeps it takes for the solver to find the ground state (see Section \ref{meas_diff}). The SA algorithm performs directly on the problem in the original RBM form (see Section \ref{11}). For testing the performance of a general MAX-SAT solver, one can easily convert the problem into the corresponding MAX-2-SAT form.

The evaluation of the hardness of the instances can be roughly divided into three parts. In the first two parts, we mainly study how the hardness difficulty scales with $\{n,f,\rho\}$ with instances generated with the random-loop algorithm. In the first part, we study how the hardness varies with the loop density, and observe the expected easy-hard-easy transitions, or hardness peaks \cite{easy-hard-easy}, with the peak amplitudes and locations dependent on $n$. In the second part, we study how the hardness scales with $n$ for different frustration indexes $f$, and find that we can realize drastically different scaling behaviors for small changes in the frustration index, which may be related to the characteristic {\it phase transition} of the corresponding spin-glass model \cite{complex_phase}. In the third part, we perform a comparative analysis between the random- and structured-loop algorithms in their abilities to generate hard instances at high-loop density. For the structured-loop algorithm, we observe a second easy-hard transition beyond the first hardness peak, and also find a doubly-exponentially scaling difficulty improvement factor (over the random loop algorithm) with respect to the frustration index. All of the testing results can be easily reproduced by using the MATLAB script \path{main.m} which includes the functionality of generating instances with both the random and structured loop algorithm with user-defined parameters $\{n,f,\rho\}$.

\subsection{Measurement of Hardness}
\label{meas_diff}

The difficulty of a generated instance is measured by the number of sweeps it takes for the simulated annealing (SA) algorithm (see Appendix \ref{appendix:simu}) summed over all runs, $N_{tot}$, to find the ground state configuration. A {\it sweep} is defined as an update over all the spins in the RBM, and a {\it run} is defined as a sequence of sweeps before the spin configuration is reset. The spins are reset if the solver fails to find the planted solution within the maximum allocated number of sweeps in a given run, which we denote as $N_{sweep}$. For each generated instance, the solver is run on a single core of an AMD EPYC 7401 24-core processor. Since we are only interested in the scaling behavior instead of the actual computation time required to solve the instances, we choose the number of sweeps as a difficulty measure over the walltime to reduce timing inconsistencies caused by various unrelated factors such as CPU idle time \cite{cpu}, inefficiency of the interpretive language \cite{interpretive}, and parallel efficiency \cite{parallel}. If one wishes to obtain an estimate of the scaling behavior of the number of arithmetic operations required for SA to find the ground state, one can simply rescale the number of sweeps by a factor of $n^2$ \cite{complexity_bible}, since the number of arithmetic operations required for a single SA iteration scales as $O(n^2)$ (see Appendix \ref{appendix:simu}). 

The SA solver we implement uses a linearly increasing $\beta$ schedule from 0.01 to $\log(n)$, such that the excited states are suppressed as $\frac{1}{n}$ \cite{sa_scale}. This schedule is scaled by a factor of $\frac{1}{\rho}$ in the high density regime to compensate for the increase in the magnitudes of weights (see Section \ref{appendix:simu}). To obtain a fair measure of hardness, it would be most ideal to use the optimal number of sweeps $N_{sweep}$ per run such that the total number of sweeps $N_{tot}$ is minimized for each generated instance. If $N_{sweep}$ is too small, then it is very unlikely for SA to discover the ground state in the highly non-convex energy landscape even if we were to perform many resets, and if $N_{sweep}$ is too large, then the rate of $\beta$ may be unnecessarily slow for the given difficulty, meaning that the descent in RBM energy is unnecessarily ``careful", making the solver take longer than needed to find the ground state. The task of finding the optimal $N_{sweep}$ is difficult, so to have a reasonable estimate of the optimal $N_{sweep}$ for hard instances, we first carefully tune $N_{sweep}$ for easy instances (of small size $n$ and small frustration $f$), and try to see how $N_{sweep}$ scales with $n$ and $f$. The optimal $N_{sweep}$ parameter can then be extrapolated for larger $n$ and $f$. More details of this method is given in Appendix \ref{appendix:sweep}. In the script \path{main.m} \cite{rudy_git}, $N_{sweep}$ is by default set to a value optimal for solving instances at the hardest loop density for every $n$ and $f$ (see Section \ref{Section_density} and Section \ref{Section_frus}).

\subsubsection{Simulated Annealing as Proxy}

We here make some brief comments justifying the use of single-spin SA for the hardness measurement of RBM instances, mainly by heuristically arguing that standard global algorithms are not expected to provide a significant improvement in performance, if any, over the single-spin SA on a bipartite constraint graph. We first note that since an RBM is highly connected, any attempt to perform a cluster update \cite{wolff, pt_iso} will likely be trivial, meaning either the clusters will be very small (at high temperature) or a cluster will span almost the entire RBM (at low temperature), which is effectively a small cluster up to the global $\mbb{Z}_2$ gauge. This is because the percolation threshold for a randomly weighted bipartite graph is sharp \cite{percolation}, and therefore cluster updates are not expected to provide any considerable improvement, if any, on the mixing rate of the simulation. As for algorithms based on the message passing techniques \cite{survey}, a spin update follows roughly an update rule in the orthant dynamics generated by spins distant 2 away (or next-nearest neighbors) on the graph. In addition, since the chromatic number is $2$ in a bipartite graph, a visible spin update will be informed by messages originated from all the other visible spins through the hidden layer. This means that in a complete bipartite graph, every spin will see approximately the same message, and an iteration of spin updates over the entire RBM will likely be trivial.

Finally, we point out that the hardness of the instance class is measured from the TTS (time to solution) statistics of a general incomplete stochastic solver (with the total number of SA sweeps being the proxy), meaning that the solver does not have to {\it prove} the optimum \cite{complete}, as it is able to check continuously against the planted solution and terminate immediately after the solution is reached. This TTS hardness measure is separate from the hardness of proving the optimum (required by a complete solver), and serves as a more accurate measure of the non-convexity of the energy landscape, whereas the hardness of proof generally scales with the clause density, which may not be directly related to the shape of the energy landscape. Most open-source incomplete solvers employs some implementation of WalkSAT as a subroutine \cite{minisat,glucose,open-wbo}, which is equivalent to the SA algorithm we use for this work. 

\subsection{Hardness Peak}
\label{Section_density}

From preliminary studies with a small sample of instances, we find that the locations of the hardness peaks for different system sizes are rather insensitive to the frustration index or the reset schedule, $N_{sweep}$. Therefore, we choose to perform this study with a frustration index of $f=0.05$ so that the instances are easy enough to be solved within a reasonable time window, meaning that we have the ability to solve a larger number of instances to reduce the uncertainty of the hardness measurement. As the first iteration of the optimization of SA (see Appendix \ref{appendix:sweep}), we optimize $N_{sweep}$ for solving instances generated at density $\rho = 0.47$, corresponding to the locations of the hardness peaks for small system sizes.

We attempt to locate the hardness peaks for instances of sizes ranging from $n=30$ to $n=200$ in increments of $10$. For each size $n$, we perform a hardness measurement for the following densities
\begin{equation*}
\rho = 0.1\times1.12^k \qquad k\in [[1,20]],
\end{equation*}
which is a geometric series from $0.1$ to around $1$, so the spacing of the densities will be uniform on a log scale. Measuring the hardness over these densities allows for a rough estimate of where the hardness peaks are located. For each $n$, we then ``zoom in" on the range of densities where the peaks are expected to be in, and measure the hardness over this range with a resolution of $0.005$. This finer resolution allows us to pinpoint more precisely the location of the hardness peaks.

For each tuple $\{n,\rho\}$, we generated 10,000 different instances and solve them with SA to estimate the sample distribution of $N_{tot}$, from which we extract the 95th percentile of $N_{tot}$ under the assumption that the distribution is approximately log-normal \cite{log_norm} (see Appendix \ref{appendix:sweep}). This is then reported as the measure of hardness at $\{n,\rho\}$.

\begin{figure}
\begin{center}
\includegraphics[scale=0.8]{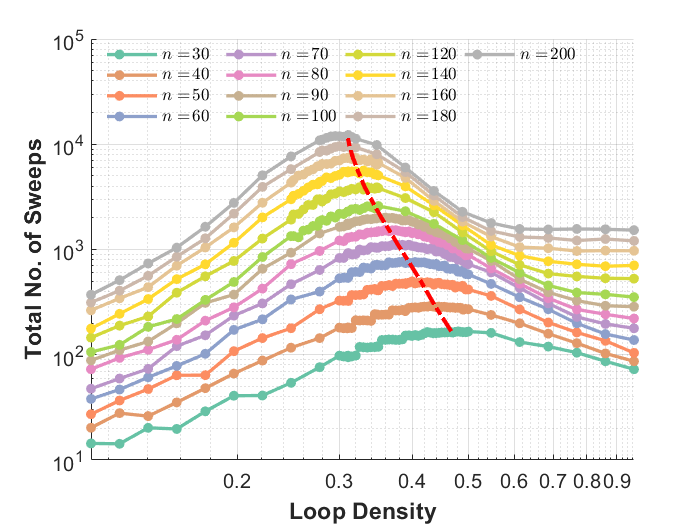}
\end{center}
\caption{Total number of sweeps, $N_{tot}$, versus loop density, $\rho$, plots for different values of system size, $n$. Note that the data points are more concentrated near the hardness peaks for higher resolution. Several plots for $n>100$ are omitted in the Figure for visual clarity, but they are nonetheless used to perform the fitting. The dashed red line shows the exponential fitting $\rho(n)$ for the relationship between the loop density of the hardness peak and $n$, Eq.~(\ref{rhofit}), which appears to plateau to a value around $0.3$ as the system size is increased.}
\label{test_density}
\end{figure}

The relationship between the hardness and loop density for various $n$ is shown in Figure \ref{test_density}. Note that for a larger system, the hardness peak is located at a smaller loop density, but the peak density appears to plateau to a constant value as the system size is increased. If we formally define the peak density, $\rho_{peak}(n)$, as the density at which the generated instances result in the highest 95th percentile of $N_{tot}$, we then find that the relationship between the peak density and the system size is well-fitted by the following decay function
\begin{equation}\label{rhofit}
\rho_{peak}(n) = 0.3035 + 0.2952\times \exp(-0.0196n).
\end{equation}

The appearance of the hardness peaks is expected and is common in most planted constraint satisfaction problems. In the regime of low loop density, the loops do not interact and the system can be factored into subsystems generated by individual loops. When there are too many loops, then the population of local minima in the energy landscape is very low (see Section \ref{limit}). Note that unlike the original frustrated-loop algorithm used by the quantum annealing community \cite{loop}, the loops in our algorithm do not intersect, so the decrease in difficulty is not attributed to the decrease in frustration due to destructive intersections (see Section \ref{int}). We thus attribute the decrease in difficulty solely to the random nature of the distribution of the negative weights, for which the LLN will almost always guarantee the absence of local minima in the regime of high loop density (see Appendix \ref{appendix:local_max}). Therefore, by using the structured-loop algorithm as described in Section \ref{modi_algo}, it is possible to retain the difficulty at high loop densities by enforcing certain structures on the loops. An empirical study of the hardness of structured loop instances in the high density regime will be presented in Section \ref{Section_struc}.

\subsection{Difficulty vs. Frustration}
\label{Section_frus}

\begin{figure}
\begin{center}
\includegraphics[scale=0.8]{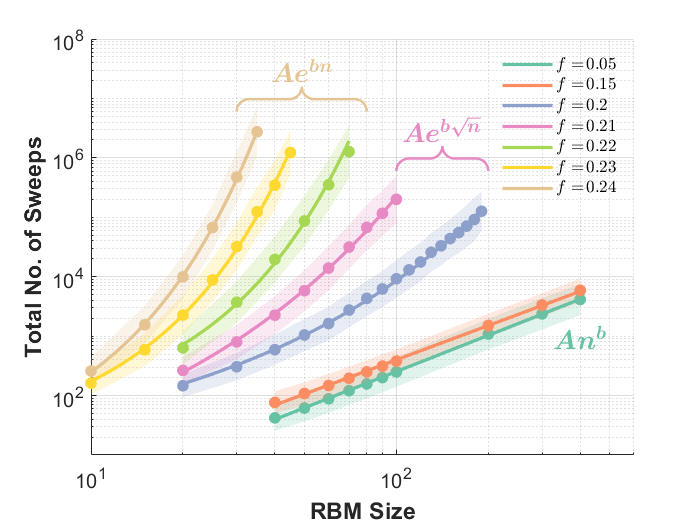}
\end{center}
\caption{Scaling behavior of the difficulty with respect to system size, $n$, for different frustration indexes, $f$. Note that the plots are on a log-log scale such that a polynomial scaling behavior will appear as a straight line, and an exponential scaling behavior will appear as a curve. The solid curves shown in the Figure are fitted curves for the estimated geometric mean of the $N_{tot}$ samples. The shaded area denotes the deviation from the mean by 0.5 times the estimated standard deviation in log space, which corresponds to roughly the 31th to 69th percentile estimates assuming an underlying log-normal distribution. For $f=\{0.05,0.1,0.15\}$, the data points are well fitted by a polynomial function of the form $Ab^n$ with parameters $\{A,b\} = \{0.0225,2.0259\},\,\{0.0284,2.0168\},\,\{0.0591,1.9150\}$ respectively. For $f=\{0.2,0.21\}$, the data are well fitted by an exponential function of the from $Ae^{b\sqrt{n}}$ with parameters $\{A,b\} = \{6.2343,0.7227\},\,\{1.0184,1.2283\}$. For $f=\{0.22,0.23,0.24\}$, they are well fitted by an exponential function of the form $Ae^{bn}$ with parameters $\{A,b\} = \{30.7289,0.1579\},\,\{13.3808,0.2564\},\,\{5.9172,0.3738\}$ }
\label{test_frus}
\end{figure}

After determining the peak density $\rho_{peak}(n)$ for each $n$, we now have the ability to generate the hardest instances for a given pair of $\{n,f\}$. This allows us to study the scaling behavior of the hardness over the hardest instances with respect to $n$ for different frustration indexes $f$. For this study, we use the following 8 frustration indexes
\begin{equation*}
f = \{0.05,0.10,0.15,0.2,0.21,0.22,0.23,0.24\}.
\end{equation*}
For each $f$, we choose an appropriate series of $n$ to estimate the scaling behavior. Since the higher the frustration, the more difficult the instances, only small values of $n$ can be used for highly frustrated instances to guarantee finding the solution within reasonable time.

For this study, we use a sweep schedule $N_{sweep}$ that is optimized for solving the hardest instances (see Appendix \ref{appendix:sweep}). The sample size of the instances for each pair of $\{n,f\}$ ranges from 100 to 10000 depending on how hard the instances are (the harder the instances, the smaller the sample size). As the 95th percentile estimate for $N_{tot}$ is noise dominated for small sample size, we instead opt to use geometric mean as a more reasonable measure of hardness.

The results are shown in Figure \ref{test_frus}, where the data points are fitted with either a polynomial or exponential function depending on the convergence of the fitting. The interesting result is that by tuning the frustration index, we can achieve different scaling laws for the difficulty. For low frustration indexes, or $f=\{0.05,0.10,0.15\}$, the scaling appears to be quadratic. For medium frustration indexes, or $f=\{0.2,0.21\}$, the scaling follows a sub-exponential trend of the form $Ae^{b\sqrt{n}}$. And for high frustration indexes, or $f=\{0.22,0.23,0.24\}$, the scaling follows the standard exponential growth of the form $Ae^{bn}$. 

The drastically different scaling laws that we can achieve by only slightly varying $f$ seems to hint at two discontinuous phase transitions in complexity driven by the frustration index, which may be potentially related to the thermal phase transition of the corresponding spin model. Note that phase transitions in classical \cite{phase} or quantum spin-glass models \cite{qphase} have been well-studied, and in the latter case, it is known that a {\it quantum} phase transition can be driven by the strength of the frustrated coupling terms \cite{qphase_frus}. However, we believe that such two-stage discontinuous phase transition has not been previously studied in any classical spin model. Therefore, this empirical result hints that the bipartite Ising spin glass may be an exceptional spin model rich in critical phenomena. 

\subsection{Structured-Loop Algorithm}
\label{Section_struc}

\begin{figure}
\begin{center}
\includegraphics[scale=0.8]{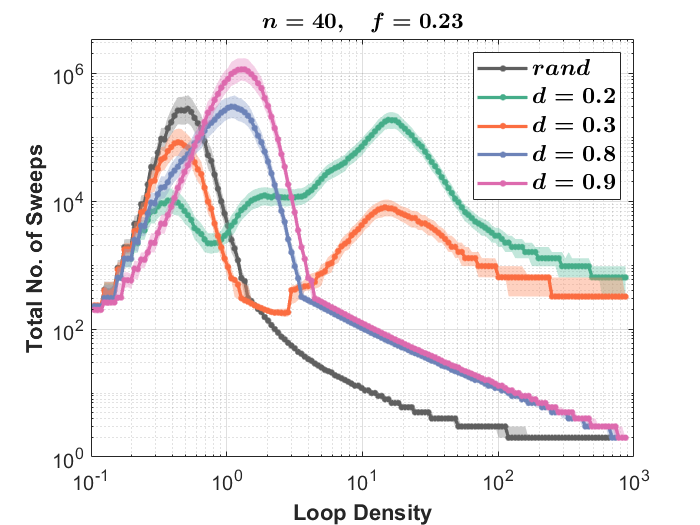}
\caption{\label{fig_struct} The hardness peaks of instances generated by the structured loop algorithm, measured as the median statistics, for a $40 \times 40$ RBM at frustration $f=0.23$, with the loop density varied from $0.01$ to around $10^3$. Various parameters of the negative block size, indicated in the legend as $d$, are used to generate different hardness variation behaviors, along with the regular hardness peak for the random loop algorithm shown as reference (indicated as 'rand'). Instances generated at $d=\{0.2,0.3\}$ display two hardness peaks, with the right peak being at an extensive density ($\sim 40$), while instances generated at $d=0.9$ display one prominent hardness peak at unitary density ($\sim 1$). In general, instances generated with the structured loop algorithm are harder in the high density regime relative to the random loop algorithm. }
\end{center}
\end{figure}

In Section \ref{modi_algo} and Appendix \ref{appendix:l_max}, it is argued that the structured loop algorithm is capable of generating hard instances at extensive loop density, and the hardness of the instances can be further increased by concentrating the negative weights into a small block of the weight matrix. This section mainly serves as an empirical support of this claim. The testings are done on an $40\times 40$ RBM at frustration $f=0.23$, and the loop density is varied over the following geometric series from 0.1 to around 1000,
\begin{equation*}
\rho = 0.1 \times 1.12^k \quad k\in [[1,80]].
\end{equation*}
For each $\{n,f,\rho\}$ triple, 10000 RBM instances are generated independently to ensure accurate percentile statistics. In this study, we use the median TTS as the measure of hardness, as higher percentile statistics (above the 95th percentile) appears to be extremely sensitive to the optimization parameters (see Appendix \ref{appendix:sweep}).

Note that a higher frustration index is chosen as it is required for the hardness to be maintained in the high density regime, in addition to display a more prominent easy-hard-easy-hard-easy transition. In Figure \ref{fig_struct}, the structured loop algorithm is compared against the random loop algorithm, and a drastically different hardness variation behavior for different frustration blocking scheme is observed, which suggests a possible complexity phase transition induced by the size of the negative block. 

In essence, the frustration index deviating from the maximal value of $f=0.25$ explicitly breaks the symmetry between the planted ground state and the planted metastable cluster with respect to the generation method (see Section \ref{sec_slog}), thus breaking also the symmetry of frustration blocking scheme between $d$ and $1-d$. The hardness variation behavior then changes critically at the threshold of $d=0.5$, at which point the hardness peak breaks into two, though an analytic description of why this occurs appears to be difficult. In addition, a larger deviation from the $d=0.5$ equal blocking scheme appears to result in harder instances, and an a heuristic argument is given in Appendix \ref{appendix:local_max} that attempts to explain this behavior in terms of local field {\it dispersion}.

\section{Connection to RBM Pre-training}
\label{app}

The focus of this work is mainly on generating weighted MAX-2-SAT instances of tunable difficulty with frustrated loops, and the RBM terminology was used so far mainly as a convenient denotation for a general bipartite spin-glass with Ising-type coupling. Going beyond terminology, there are indeed several substantial connections of this work to the field of machine learning, with some directly applicable to using RBM for unsupervised learning \cite{rbm}. These applications form two main branches, one practical and one heuristic. 

The practical application is that minimizing the RBM energy is equivalent to finding the mode of the RBM model distribution, which can be used to make highly informative weight updates during the pre-training \cite{mode_train}, which improves stability of the pre-training routine and enables the discovery of a model distribution with a much lower KL-divergence (against the data distribution) compared to standard methods such as contrastive divergence \cite{cd}. The heuristic connection is that the frustration index of the RBM contains much information on the behavior of the RBM during pre-training, meaning that it is possible to use the frustration index as an important indicator for certain properties of the RBM (see Section \ref{rbm_frus}) as a monitoring parameter for the RBM during training, in place of the KL-divergence whose evaluation is impractical for large system size. These connections are studied in more depth both analytically and empirically in another work \cite{mode_train}.

\subsection{Pre-training Using the Mode}

During pre-training, a gradient descent on the Kullback-Leibler (KL) divergence between the data and model distribution generates the following update rule for the RBM weights \cite{rbm}:
\begin{equation*}
\Delta W_{ij} = \mu(\braket{v_i h_j}_D - \braket{v_i h_j}_M),
\end{equation*}
where $\mu$ is the learning rate. The first term denotes the expected value of $v_i h_j$ over the {\it data distribution}, and the second term denotes the expected value over the {\it model distribution}. There are two main problems with this update rule. First, the KL divergence (as with any other loss function in machine learning) is highly non-convex \cite{sgd}, meaning that following the gradient will likely lead to a local minimum instead of the global one. Second, the model term (the second term in the update rule) is notoriously difficult to compute exactly since it requires summing over an exponential number of configurations \cite{rbm}:
\begin{equation*}
\braket{v_ih_j}_M = \sum_{\{ \mathbf{v},\mathbf{h} \}} p(\mathbf{v},\mathbf{h})v_ih_j,
\end{equation*}
where the expression can be simplified by tracing out the hidden layers, but still leaving an exponential number of visible layer configurations. In standard practice, this term is approximated using {\it contrastive divergence} \cite{cd}, which is a form of Markov chain Monte Carlo (MCMC). It is well known that CD is a highly unstable method and converges very poorly \cite{ad_mcmc}. 

The reason why CD performs poorly is because it is prone to being ``frozen" under one of the modes in a multi-modal distribution, and since it is energetically expensive to transit to another mode through single node flips, this essentially ``traps" the Markov chain and prevents it from effectively exploring the entire probability distribution \cite{mode_mcmc}. One obvious solution is to reinitialize the Markov chain at the global mode whenever it is trapped, and this has in fact been shown to be effective in improving the mixing time of the Markov chain \cite{dart,worm}. Recently, we have discovered that the usefulness of the mode goes beyond the effective re-initialization for the MCMC. In fact, using the mode {\it directly} to update the weights guarantees stability of the training routine and an effective exploration of the probability distribution, which results in the discovery of a model distribution with drastically lower KL-divergence compared to CD. We refer to such weight update method as {\it mode training}, which we explore extensively in our other work \cite{mode_train}.

\subsection{Frustration Index as an Indicator}
\label{rbm_frus}

\begin{figure}
\begin{center}
\includegraphics[scale=0.7]{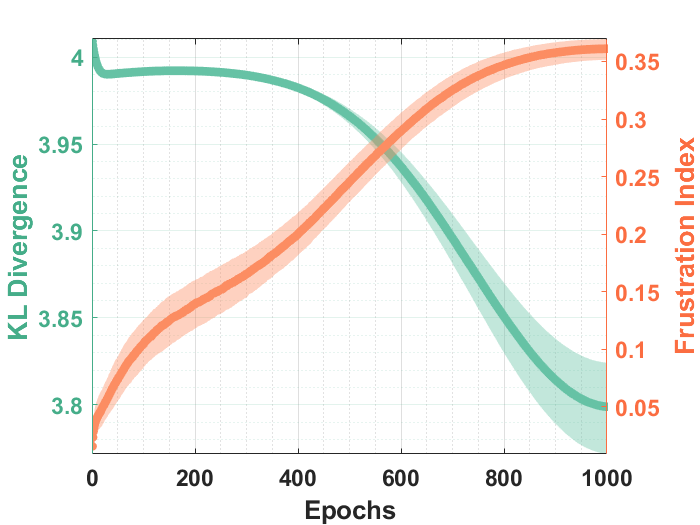}
\end{center}
\caption{The evolution of the KL-divergence and frustration index during the pre-training routine of 10000 randomly initialized RBMs, with the solid line representing the median statistics and the shaded area enclosing the 30-70 percentile. The RBM size is $9\times 12$, and the data set consists of 9 visible configurations of equal weights generated by a shifting bar of length 5 \cite{sb}. The pre-training algorithm employs the standard CD-1 estimation of the gradient \cite{cd}, with the learning rate decreasing linearly from $5 \times 10^{-2}$ to $5 \times 10^{-4}$ over 1000 epochs, and the full data set being used for every epoch. The computation of the KL divergence and frustration index is exact.}
\label{frus_evo}
\end{figure}

During RBM pre-training, it is ideal that the KL divergence \cite{kl} can be directly monitored, as it is an indicator of how ``close" the data and model distributions are, so we can observe directly the performance of the pre-training method. However, computing the KL divergence is usually impractical for large systems since it requires computing the partition function which involves an exponentially scaling number of sums, and current methods for estimating the KL divergence are mainly based on AIS (annealed importance sampling), which are rather inaccurate \cite{kl_est}.

Alternatively, the frustration index can be used as an indicator of the evolution of the model distribution. Note that during pre-training, the model distribution evolves to approximate the data distribution, which is generally a distribution with multiple dominant modes far apart from each other \cite{rbm}. This is equivalent to an RBM energy landscape with low energy states separated closely in energy but far in distance. In Section \ref{ffrus}, we argued that this is a characteristic of a highly frustrated system. Therefore, during pre-training, the frustration index is expected to rise to a relatively large value, and if it does not, then it may indicate that the model distribution is not converging to the data distribution. Computing the frustration index is computationally inexpensive once the minimum energy configuration is known \cite{frus}, so the evaluation of the frustration index can be scheduled to occur in conjunction with finding the global mode\footnote{Note that finding the ground state of the RBM exactly, though easier than computing the KL-divergence, is still an NP-hard problem. Practically speaking, we only have to find a dominant state with energy sufficiently close to the ground state, such that the frustration index can be effectively approximated. }.

In Figure \ref{frus_evo}, we plot the evolution of the frustration index as we pre-train a $9\times 6$ RBM using the standard CD-1 gradient estimate \cite{cd}, and we observe a clear increasing trend of the frustration index with the exception of a small dip in the middle of the pre-training routine. The frustration index is initially zero, and grows until it plateaus to a value around 0.18, which is at the critical value where difficulty scaling behavior transits from polynomial to exponential (see Fig.~\ref{test_frus}). A possible interpretation of this phenomenon is that the effectiveness of training the RBM with the mode is exchanged as the hardness of finding the mode of the PMF itself, which is expected from the ``no free lunch" theorem of optimization algorithms \cite{lunch}. Nevertheless, from a practical standpoint, if we were to use a highly effective MAX-SAT solver, obtaining a lower KL divergence using the {\it mode-training} method may be well worth the computational expense of finding the global mode.

In current practices, the goal of pre-training the RBM serves the purpose of acquiring better initial weight values for the supervised learning stage when the data labels are scarce \cite{why_pre}. On the other hand, the goal of supervised learning is to increase the certainty of the activation of out-nodes such that they correspond to the data labels \cite{deep}. However, it was recently discovered that pre-training of an RBM is also capable of achieving a higher certainty of the activation of hidden nodes \cite{hidacv}, and in our other work \cite{mode_train}, we explore deeper the correlation between the nodal activation in the hidden layer and the frustration index of the RBM. A mediator of this correlation may come in the form of information compression, in the sense that the increase of the activation of hidden nodes is potentially related to the drift-diffusion transition on the information bottleneck (IB) curve \cite{rbm_info}, which itself may parallel the phase transition phenomenon driven by the frustration index (see Section \ref{Section_frus}) as the RBM is pre-trained. It will be interesting to explore these potentially useful directions of study even further to establish the connection between the frustration index and the dynamical properties of the RBM during unsupervised and supervised training.

\section{Code Availability}

A MATLAB implementation of the random and structured loop algorithm is available in the Github repository \path{PeaBrane/Ising-Simulation}, which also includes the code for the SA solver used to perform the hardness measurement. In addition to the codes used for this work, the repository also contains codes for generating hard instances on a 3D cubic lattice \cite{loop,tiling}, and standard algorithms for the simulation of the lattice \cite{pt_iso}. Scripts for converting the instances directly into a (w)cnf file is also available in the repository. 

\section{Conclusion}

In this paper, we reformulated the frustrated-loop algorithm, originally conceived to test quantum annealers, in a way such that it can be directly applied to generating weighted MAX-2-SAT instances of tunable hardness. In addition, we introduced the structured-loop algorithm for the purpose of extending the hardness into the high density regime. An unexpected discovery is the possible two-stage phase transitions in the hardness scaling behavior driven by the frustration index, which will hopefully motivate further theoretical work on the characterization of spin-glass models via the frustration index. Since both algorithms are capable of generating instances of tunable hardness over a wide range of clause densities, they can be used in conjunction to effectively evaluate the performance of a wide class of solvers \cite{mse}. It would be also interesting to test unconventional solvers that operate under continuous-time dynamics \cite{analog_maxsat}, and solvers based on the memcomputing architecture \cite{DMM2,max-sat} that employs memory-assisted dynamics to induce long-range correlations of spins \cite{sheldon2019taming}. We leave these testings for future work.

\section{Acknowledgments}
Y.R.P. and M.D. are supported by DARPA under grant No. HR00111990069. H.M. acknowledges support from a DoD-SMART fellowship. M.D. acknowledges partial support from the Center for Memory and Recording Research at UCSD. 

\raggedbottom
\pagebreak

\appendix

\section{Penalty Function}
\label{appendix:rbi}

In this section, we show the existence of a penalty function $C(\mbf{v}, \mbf{h})$ in the conversion from a general QUBO instance to a bipartite QUBO instance such that the optimum is left invariant. We formalize this statement as follows.

\begin{proposition}
Given a QUBO instance 
\begin{equation*}
\sum_{i=1}^n B_i x_i + \sum_{i=1}^n \sum_{j=i+1}^n Q_{ij}x_ix_j
\end{equation*}
with (any one of) the maximum (maxima) being $\mbf{x'}$. If we let 
\begin{equation*}
c = \sum_{i=1}^n |B_i| + \sum_{i=1}^n \sum_{j=i+1}^m |Q_{ij}|,
\end{equation*}
and the penalty function be
\begin{equation*}
C(\mbf{v}, \mbf{h}) = 2c \sum_{i=1}^n ( v_i + h_i - v_ih_i ),
\end{equation*}
then (one of) the maximum (maxima), $\{\mbf{v'}, \mbf{h'}\}$, of the bipartite QUBO problem,
\begin{equation*}
E(\mbf{v}, \mbf{h}) = E_0(\mbf{v}, \mbf{h}) + C(\mbf{v}, \mbf{h}),
\end{equation*}
must satisfy $\mbf{v'} = \mbf{h'} = \mbf{x'}$, where $E_0(\mbf{v}, \mbf{h})$ is given as
\begin{equation*}
E_0(\mbf{v}, \mbf{h}) = \sum_{i=1}^n B_iv_i + \sum_{i=1}^n \sum_{j=i+1}^n Q_{ij}v_ih_j.
\end{equation*}
\end{proposition}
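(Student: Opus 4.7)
The plan is to show that every maximizer of $E = E_0 + C$ must lie on the diagonal $\mbf{v} = \mbf{h}$, and then to observe that on this diagonal the problem reduces to the original QUBO, whose maximizer is $\mbf{x}'$ by hypothesis. The entire argument hinges on the choice of $c$ as the sum of absolute values of all $E_0$ coefficients, which provides the clean global upper bound $|E_0(\mbf{v}, \mbf{h})| \leq c$ via the triangle inequality together with $v_i h_j \in \{0,1\}$.

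First I would argue the diagonality of any maximizer by a local repair argument. Given an arbitrary configuration $(\mbf{v}, \mbf{h})$, I would align $\mbf{h}$ with $\mbf{v}$ one mismatched coordinate at a time. Each aligning flip modifies $C$ by a predictable amount whose magnitude scales with $2c$, and the cumulative modification of $E_0$ over the entire alignment process is bounded by $2c$ in magnitude (twice the global bound on $|E_0|$, absorbing the initial and final values). Since the coefficient $2c$ multiplying the penalty is chosen precisely to strictly dominate this $E_0$ swing, the aligned configuration strictly increases $E$ whenever mismatches exist, ruling out off-diagonal maximizers.

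With attention restricted to the diagonal, $E_0(\mbf{v}, \mbf{v})$ reproduces the original QUBO polynomial evaluated at $\mbf{v}$ (since the product $v_i h_j$ becomes $v_i v_j$), and $C(\mbf{v}, \mbf{v})$ collapses to a function of $\mbf{v}$ alone, which can be absorbed into the linear coefficients $B_i$ to yield an equivalent QUBO whose argmax agrees with that of the original, namely $\mbf{x}'$. The main obstacle I anticipate is the sharpness of the repair-argument bookkeeping: the per-coordinate changes in $E_0$ under single-bit flips may have either sign, so one cannot simply compare a local penalty gain against a local $E_0$ loss, and the argument must instead compare cumulative penalty savings against the global variation of $E_0$, which is why the coefficient $2c$ rather than $c$ is essential in the definition of $C$.
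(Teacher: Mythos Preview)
Your domination argument is essentially the paper's: both bound $|E_0(\mbf{v},\mbf{h})| \leq c$ via the triangle inequality (using $v_i, h_j \in \{0,1\}$) and then observe that the $2c$-weighted penalty difference between an off-diagonal configuration and its diagonal counterpart is at least $2c$ per mismatch, which dominates the at-most-$2c$ swing in $E_0$. The paper packages this as a direct contradiction between the constrained and unconstrained maxima rather than a coordinate-wise repair, but the content is the same.

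The genuine gap is your last step. You write that $C(\mbf{v},\mbf{v})$ ``can be absorbed into the linear coefficients $B_i$ to yield an equivalent QUBO whose argmax agrees with that of the original.'' This is false in general: shifting each $B_i$ by an additive term of order $2c$ (which by construction dominates every $|B_i|$ and $|Q_{ij}|$) can certainly change the argmax of a QUBO. What the paper actually relies on is that the per-coordinate penalty term vanishes whenever $v_i = h_i$, so that $C(\mbf{v},\mbf{v}) \equiv 0$ and the diagonal restriction $E(\mbf{v},\mbf{v}) = E_0(\mbf{v},\mbf{v})$ is \emph{exactly} the original QUBO, with argmax $\mbf{x}'$. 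There is nothing to absorb. (You may have been led to your workaround by noticing that for $\{0,1\}$ variables the expression $v_i+h_i-v_ih_i$ as literally written does \emph{not} vanish at $v_i=h_i=1$; this is a sign slip in the stated proposition, and the paper's own proof proceeds under the intended case analysis, namely that the term equals $0$ when $v_i=h_i$ and $-1$ otherwise.)
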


\begin{proof}
WLOG, we can assume that the original QUBO problem is non-degenerate so there is only one maximum. We first begin by noting that 
\begin{equation*}
v_i + h_i - v_i h_i = 
\begin{cases}
0 \quad &\text{if} \; v_i = h_i, \\
-1 &\text{if} \; v_i \neq h_i.
\end{cases}
\end{equation*}
From this, we see that the maximum of the bipartite QUBO problem must satisfy $\mbf{v'} = \mbf{h'}$. 

If this were not the case, then we let $\mbf{v'} = \mbf{h'}$ be the optimum with the condition $\mbf{v} = \mbf{h}$ explicitly enforced, and $\mbf{v''} = \mbf{h''}$ be the optimum without enforcing any condition, so $E(\mbf{v''}, \mbf{h''}) > E(\mbf{v'}, \mbf{h'})$. We denote $I = \{ i \cond v''_i \neq h''_i\}$ as the set of indices where $\mbf{v''}$ and $\mbf{h''}$ differ. WLOG, we let $I = [[1,n'']]$, where $n'' < n$. Then we have
\begin{equation*}
\begin{split}
E(\mbf{v''}, \mbf{h''}) - E(\mbf{v'}, \mbf{h'})
\leq
& \, |E_0(\mbf{v'}, \mbf{h'})| + |E_0(\mbf{v''}, \mbf{h''})| - 2n' c \\
\leq
& \, c + c - 2n'c = 2(1-n') c \\
< & \, 0,
\end{split}
\end{equation*}
where the second inequality is due to the fact that for $\forall \{ \mbf{v}, \mbf{h} \}$, $|E_0(\mbf{v}, \mbf{h})| \leq c$ by the triangle inequality. This contradicts with the assumption that $E(\mbf{v''}, \mbf{h''}) > E(\mbf{v'}, \mbf{h'})$, so the maximum of $E(\mbf{v}, \mbf{h})$ must satisfy $\mbf{v} = \mbf{h}$.

This implies that
\begin{equation*}
\{ \mbf{v'}, \mbf{h'} \} = \argmax_{\{ \mbf{v}, \mbf{h} \}} E(\mbf{v}, \mbf{h}) = \argmax_{\{ \mbf{v}, \mbf{h} \cond \mbf{v} = \mbf{h} \}} E(\mbf{v}, \mbf{h}),
\end{equation*}
so we can find the maximum of $E(\mbf{v}, \mbf{h})$ by first setting $\mbf{v} = \mbf{h}$, which gives us
\begin{equation*}
E(\mbf{v}, \mbf{v}) = \sum_{i=1}^n B_iv_i + \sum_{i=1}^n \sum_{j=i+1}^m Q_{ij}v_iv_j - n,
\end{equation*}
which is equivalent to the expression for the original QUBO energy, so we have $\mbf{v'} = \mbf{h'} = \mbf{x'}$.
\end{proof}

\section{Switching Subset and Distance}
\label{appendix:metric_proof}

Given two states, $\mbf{s} = ( \mbf{v}, \mbf{h} )$ and $\mbf{s'} = ( \mbf{v'}, \mbf{h'} )$, it is convenient to denote the indices of visible spins that are different between the two states as
\begin{equation*}
I(\mbf{s}, \mbf{s'}) = \{ i \bcond v_i \neq v'_i \},
\end{equation*}
and the indices of differing hidden spins as
\begin{equation*}
J(\mbf{s}, \mbf{s'}) = \{ j \bcond h_j \neq h'_j \}.
\end{equation*}
Furthermore, we denote the cardinality of the two sets as, $n' = |I(\mbf{s}, \mbf{s'})|$ and $m' = |J(\mbf{s}, \mbf{s'})|$, which represent the numbers of differing visible and hidden spins respectively. \\

Under this denotation, we can write the switching subset as
\begin{equation*}
F(\mbf{s}, \mbf{s'}) = (I \times J^c) \,\cup\, (I^c \times J),
\end{equation*}
where $I^c = [[1,n]]/I$ and $J^c = [[1,m]]/J$. It is then obvious that the cardinality of the switching subset is given as
\begin{equation*}
\begin{split}
|F(\mbf{s}, \mbf{s'})| &= n'\times (m-m') + (n-n')\times m' \\
&= n'm + nm' - 2n'm',
\end{split}
\end{equation*}
so the distance is given as
\begin{equation*}
d(\mbf{s}, \mbf{s'}) = \frac{|F(\mbf{s},\mbf{s'})|}{nm} = \frac{n'}{n} + \frac{m'}{m} - 2\frac{n'm'}{nm}.
\end{equation*}
We now show that the space defined by this distance is a pseudometric space, and states in this space are distinguishable up to a global spin flip.

\begin{proposition}
$d(\mbf{s}, \mbf{s'})$ is a pseudometric, with $d(\mbf{s}, \mbf{s'}) = 0$ if and only if $\{n',m'\} = \{0,0\}$ or $\{n',m'\} = \{n,m\}$.
\end{proposition}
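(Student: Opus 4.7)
The plan is to verify the pseudometric axioms one by one, and then settle the characterization of when $d$ vanishes. Symmetry of $d$ is immediate, since the expression depends on $\mbf{s}$ and $\mbf{s'}$ only through $n'$ and $m'$, both of which are symmetric in the two states; reflexivity $d(\mbf{s}, \mbf{s}) = 0$ is obvious. For non-negativity I would rewrite the expression by completing the product,
\begin{equation*}
d(\mbf{s}, \mbf{s'}) = \frac{n'}{n}\Big( 1 - \frac{m'}{m} \Big) + \frac{m'}{m}\Big( 1 - \frac{n'}{n} \Big),
\end{equation*}
which exhibits $d$ as a sum of two products of numbers in $[0,1]$, hence non-negative.

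The same decomposition settles the ``iff'' characterization. Vanishing of $d$ forces both summands above to vanish. A short factor-by-factor case split then gives either $n'/n = 0$ (whence the second summand forces $m' = 0$), or $1 - m'/m = 0$ (whence the first summand forces $n' = n$). Hence $d(\mbf{s}, \mbf{s'}) = 0$ iff $(n', m') = (0, 0)$ or $(n', m') = (n, m)$, corresponding exactly to $\mbf{s}' = \mbf{s}$ or $\mbf{s}' = -\mbf{s}$, which confirms that $d$ distinguishes states precisely up to the global $\mbb{Z}_2$ flip.

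The main obstacle is the triangle inequality, which I would reduce to set-theoretic algebra via the group structure on switching subsets noted in the remark of Section \ref{v_switch}. Specifically, I would establish the key identity
\begin{equation*}
F(\mbf{s}, \mbf{s''}) = F(\mbf{s}, \mbf{s'}) \,\triangle\, F(\mbf{s'}, \mbf{s''}),
\end{equation*}
where $\triangle$ denotes symmetric difference; this is essentially the translation of $G_{\mbf{s}\mbf{s''}} = G_{\mbf{s}\mbf{s'}} G_{\mbf{s'}\mbf{s''}}$ to the level of sign flips on matrix indices. The cleanest derivation introduces indicator bits $\epsilon_i = \mathbb{1}[v_i \neq v'_i]$ and $\delta_j = \mathbb{1}[h_j \neq h'_j]$ and notes that $(i,j) \in F(\mbf{s}, \mbf{s'})$ iff $\epsilon_i \oplus \delta_j = 1$; since the indicator bits for $\mbf{s} \to \mbf{s''}$ are XORs of the bits for $\mbf{s} \to \mbf{s'}$ and $\mbf{s'} \to \mbf{s''}$, the symmetric-difference identity falls out by a one-line XOR rearrangement. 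Granted the identity, $|F_1 \triangle F_2| = |F_1| + |F_2| - 2|F_1 \cap F_2| \leq |F_1| + |F_2|$ yields $d(\mbf{s}, \mbf{s''}) \leq d(\mbf{s}, \mbf{s'}) + d(\mbf{s'}, \mbf{s''})$ upon division by $nm$. I expect the XOR identification to be the only genuinely delicate step, since the asymmetric product structure of $F \subseteq [[1,n]] \times [[1,m]]$ makes the symmetric-difference claim not quite obvious on inspection; the remaining steps are purely elementary.
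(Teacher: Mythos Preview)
Your proposal is correct and follows essentially the same route as the paper: both establish the triangle inequality via the symmetric-difference identity $F(\mbf{s},\mbf{s''}) = F(\mbf{s},\mbf{s'}) \,\triangle\, F(\mbf{s'},\mbf{s''})$, and both use the factorization $n'(m-m') + (n-n')m'$ (your decomposition times $nm$) for the vanishing characterization. The only minor differences are that the paper gets non-negativity for free from $|F|$ being a cardinal number rather than via your algebraic decomposition, and that it asserts the symmetric-difference identity without your XOR derivation, instead deducing the triangle inequality from the weaker containment $F(\mbf{s},\mbf{s''}) \subseteq F(\mbf{s},\mbf{s'}) \cup F(\mbf{s'},\mbf{s''})$.
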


\begin{proof}
Since the distance function $d(\mbf{s}, \mbf{s'})$ is just $|F(\mbf{s}, \mbf{s'})|$ divided by some constant factor, it is sufficient to prove the proposition for $|F(\mbf{s}, \mbf{s'})|$. We first show the second part of the proposition. Note that the function,
\begin{equation*}
|F(\mbf{s}, \mbf{s'})| = n'm + nm' - 2n'm',
\end{equation*}
evaluates to $0$ if $\{n',m'\} = \{0,0\}$ or $\{n',m'\} = \{n,m\}$. To show the converse, we note that the equation
\begin{equation*}
\begin{split}
& n'm + nm' - 2n'm'  \\
= & n'(m-m') + (n-n')m' = 0,
\end{split}
\end{equation*}
under the conditions $n > 0$, $n \geq n' \geq 0$, $m > 0$, and $m \geq m' \geq 0$, has solutions $\{n',m'\} = \{0,0\}$ and $\{n',m'\} = \{n,m\}$. 

To show that $|F(\mbf{s}, \mbf{s'})|$ is a pseudometric, we have to show that it is non-negative, symmetric, and satisfies the triangle inequality. First, we note that $|F(\mbf{s}, \mbf{s'})|$ is trivially non-negative as it is a cardinal number. Second, we have $|F(\mbf{s}, \mbf{s'})| = |F(\mbf{s'}, \mbf{s})|$, as the same spins are flipped to make the forward and reverse transitions, $\mbf{s} \rightarrow \mbf{s'}$ and $\mbf{s'} \rightarrow \mbf{s}$. 

Finally, we have to show that given any three states, $\{ \mbf{s}, \mbf{s'}, \mbf{s''} \}$, the inequality $|F(\mbf{s}, \mbf{s''})| \leq |F(\mbf{s},\mbf{s'})|  + |F(\mbf{s'},\mbf{s''})|$ is true. We first note that
\begin{equation*}
F(\mbf{s}, \mbf{s''}) = F(\mbf{s}, \mbf{s'}) \,\triangle\, F(\mbf{s'}, \mbf{s''}),
\end{equation*}
where $\triangle$ denotes the symmetric difference operation. We then have
\begin{equation*}
F(\mbf{s}, \mbf{s''}) \subseteq F(\mbf{s}, \mbf{s'}) \,\cup\, F(\mbf{s'}, \mbf{s''}),
\end{equation*}
which directly implies the inequality stated above. Therefore, $|F(\mbf{s},\mbf{s'})|$ is a pseudometric.
\end{proof}

\section{Energy Gaps in a Random RBM}
\label{appendix:variance}

\begin{proposition}
Given an $n \times m$ RBM with iid weights, $\mbf{W}$, normally distributed with mean $\mu$ and variance $\sigma^2$. For two random states $\{ \mbf{s}, \mbf{s'} \}$ of distance $d_0$ apart, the expected energy gap between the two states is given as
\begin{equation*}
\E_{\{ \mbf{W},\mbf{s},\mbf{s'} \}}\Big( E(\mbf{s'}) - E(\mbf{s}) \bcond d(\mbf{s}, \mbf{s'}) = d_0 \Big)
= 0,
\end{equation*}
and the variance is given as
\begin{equation*}
\Var_{\{ \mbf{W},\mbf{s},\mbf{s'} \}}\Big( E(\mbf{s'}) - E(\mbf{s}) \bcond d(\mbf{s}, \mbf{s'}) = d_0 \Big)
= 4nmd_0 (\mu^2 + \sigma^2).
\end{equation*}
\end{proposition}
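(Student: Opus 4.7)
The plan is to start from the explicit energy-gap formula of Eq.~(\ref{eng_diff}),
\begin{equation*}
E(\mbf{s'}) - E(\mbf{s}) = 2\sum_{(i,j) \in F} W_{ij}\, v_i h_j,
\end{equation*}
where $F \equiv F(\mbf{s}, \mbf{s'})$ is the switching subset and $|F| = nm d_0$ under the conditioning $d(\mbf{s}, \mbf{s'}) = d_0$. I would proceed by a tower of expectations: condition first on $\{\mbf{s}, \mbf{s'}\}$ (hence on $F$), compute the inner conditional moments, and then average over admissible state pairs with $|F| = nm d_0$. Since $\mbf{W}$ is independent of the spins, inside the inner expectation the weight and spin factors decouple.

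For the mean, the key step is to observe that the simultaneous flip $(v_i, v'_i) \mapsto (-v_i, -v'_i)$ is an involution that preserves $F$ as a set (it flips both $v_i h_j$ and $v'_i h'_j$, preserving the defining relation $v'_i h'_j = -v_i h_j$ on $F$) yet reverses the sign of $v_i h_j$. Consequently $\E(v_i h_j \mid F) = 0$ and each summand has zero conditional mean, so the conditional mean of the gap vanishes and the claim on the expectation follows.

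For the variance, since the mean is zero it suffices to compute $\E(\text{gap}^2 \mid |F| = nm d_0)$. Expanding the square produces $|F|$ diagonal terms $W_{ij}^2\,(v_i h_j)^2 = W_{ij}^2$, each contributing $4(\mu^2+\sigma^2)$ in expectation and summing to $4 nm d_0(\mu^2+\sigma^2)$. For an off-diagonal pair $(i,j)\neq(i',j')$, the weight factor gives $\mu^2$, and the spin factor $\E(v_i h_j v_{i'} h_{j'} \mid F)$ is handled by the same symmetry trick: in each of the three cases $\{i\neq i',\, j\neq j'\}$, $\{i=i',\, j\neq j'\}$, $\{i\neq i',\, j=j'\}$, at least one spin (after cancellation of squared factors) appears an odd net number of times in the product, and flipping that spin jointly with its primed counterpart is an involution that preserves $F$ while reversing the product. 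Hence every cross-term vanishes.

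The main subtlety — the one step that one must check carefully rather than write down — is to ensure that the conditioning on $|F| = nm d_0$ does not spoil the symmetry arguments underlying the vanishing expectations. This is resolved by the observation that all the involutions used (pairwise sign flips of a spin with its primed counterpart) preserve $F$ as a set, and hence also $|F|$ and $d(\mbf{s},\mbf{s'})$. With the cross-terms eliminated, the calculation reduces to the clean count of $|F|$ diagonal contributions, yielding the stated variance $4 nm d_0(\mu^2+\sigma^2)$.
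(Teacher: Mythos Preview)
Your proposal is correct and follows essentially the same route as the paper: express the gap via Eq.~(\ref{eng_diff}), expand, and kill the non-diagonal spin moments. The one genuine difference is that the paper simply asserts $\E(v_i)\E(h_j)=0$ and $\E(v_iv_{i'}h_jh_{j'})=\delta_{ii'}\delta_{jj'}$ without addressing the conditioning on $|F|$, whereas you supply the missing justification via the paired involutions $(v_i,v'_i)\mapsto(-v_i,-v'_i)$ that preserve $F$; this makes your version slightly more rigorous but not structurally different.
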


\begin{proof}
Let $F(\mbf{s},\mbf{s'})$ be the switching subset from $\mbf{s}$ to $\mbf{s'}$, then from Eq. (\ref{eng_diff}), we have
\begin{equation*}
E(\mbf{s'}) - E(\mbf{s}) = 2 \sum_{F(\mbf{s}, \mbf{s'})} W_{ij}v_i h_j.
\end{equation*}
Note that the distance condition is equivalent to $|F| = nmd_0$, then the expected energy gap is given as
\begin{equation*}
\begin{split}
& \E_{\{ \mbf{W},\mbf{s},F \}} \Big( 2 \sum_F W_{ij}v_ih_j \bcond |F| = nmd_0 \Big) \\
= & 2 \sum_F \E( W_{ij} v_i h_j).
\end{split}
\end{equation*}
However, $\E( W_{ij} v_i h_j) = \E( W_{ij}) \E(v_i) \E(h_j) = 0$, so the expected energy gap is zero.

The conditional variance is given as
\begin{equation*}
\begin{split}
\Var\Big( 2 \sum_F W_{ij}v_ih_j \Big)
& = \E\Big( \big( 2 \sum_F W_{ij}v_ih_j \big)^2 \Big) \\
& = 4 \E_{F} \Big( \sum_{(i,j) \in F} \sum_{(i',j') \in F} \E_{ \{ \mbf{W},\mbf{s} \}} \big( W_{ij}W_{i'j'}v_iv_{i'}h_jh_{j'} \big) \bcond |F|=nmd_0 \Big) \\
& = 4 \E_{F} \Big( \sum_F W_{ij}^2 \bcond |F|=nmd_0 \Big) \\
& = 4nmd_0 (\mu^2 + \sigma^2),
\end{split}
\end{equation*}
noting that $\E(v_iv_{i'}h_jh_{j'}) = \delta_{ii'}\delta_{jj'}$, which evaluates to 1 only when $i=i'$ and $j=j'$, and 0 otherwise.
\end{proof}

\section{Maximum Frustration of a $2\times m$ RBM}
\label{appendix:2m}

\begin{proposition}
The upper bound of the frustration index of a $2 \times m$ RBM is 0.25.
\end{proposition}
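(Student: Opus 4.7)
The plan is to use the positive-sum condition (Section \ref{gauged_rbm}) on two families of switching subsets that are naturally available in a $2 \times m$ gauged RBM. Writing $x_j = W_{1j}$ and $y_j = W_{2j}$, the single-column switching subsets $F = \{1,2\} \times \{j\}$ force $x_j + y_j \geq 0$ for every $j$, so at most one entry in each column is negative. I would then partition the column indices into three subsets: $A = \{j : y_j < 0 \leq x_j\}$, $B = \{j : x_j < 0 \leq y_j\}$, and $C = \{j : x_j, y_j \geq 0\}$. The column condition implies that on $A$ and $B$ the non-negative entry of the column dominates the negative one in magnitude, which already yields $T \geq 2N$, where $T$ denotes the total magnitude of all weights and $N$ the total magnitude of negative weights.

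The key step is to use the ``path'' switching subsets to strengthen this to $T \geq 4N$, which is equivalent to $f \leq 1/4$. For any choice $r : [m] \to \{1,2\}$, the set $F = \{(r_j, j) : j \in [m]\}$ is a switching subset (take $I = \{1\}$ and $J = \{j : r_j = 2\}$ in the definition of Section \ref{v_switch}), so the positive-sum condition gives $\sum_j W_{r_j, j} \geq 0$ for every such path. Applied to the worst-case path, which picks the smaller of the two entries in each column, the negative contributions from columns in $A \cup B$ must be compensated by positive contributions from the all-positive columns in $C$. This yields a global inequality of the form
\begin{equation*}
\sum_{j \in C} \min(x_j, y_j) \;\geq\; N.
\end{equation*}
Combining this with the trivial bound $x_j + y_j \geq 2 \min(x_j, y_j)$ on $C$ doubles the contribution of the $C$-columns to $T$, supplying the missing factor of two.

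The main obstacle is recognizing that the path subsets are the right tool. The column-sum conditions alone never couple different columns, so by themselves they cannot give better than $T \geq 2N$; the improvement requires a global inequality that ties the negative entries in $A \cup B$ to the positive-only columns in $C$, and the worst-case path is precisely what supplies this coupling. Once this switching subset is identified, the remaining manipulations are elementary. The bound is tight: a single loop atom (one negative edge balanced by three equal positive edges) saturates $f = 1/4$, in agreement with Section \ref{atom}.
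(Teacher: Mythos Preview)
Your proof is correct and follows essentially the same approach as the paper's: partition the columns by sign pattern (the paper's blocks $B_1,\dots,B_6$ are exactly your sets $B$, $C$, $A$ after a column permutation), use the column positive-sum conditions for one factor of two, and use the row-selecting ``path'' switching subsets for the other. The only cosmetic difference is that the paper applies two specific paths (taking row~1, respectively row~2, uniformly on the all-positive columns) to obtain $\sum_{C} x_j \geq N$ and $\sum_{C} y_j \geq N$ directly, whereas you use the single worst-case path together with $x_j+y_j\geq 2\min(x_j,y_j)$ to reach the same conclusion $\sum_{C}(x_j+y_j)\geq 2N$.
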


\begin{proof}
We first consider a $2\times m$ RBM weight matrix that is gauged such that the ground state is $\mbf{+1}$, then clearly the sum of each column has to be non-negative, so we cannot have two negative elements in the same column. This means that we can permute the rows and columns such that all the negative weight elements are concentrated on the upper left and lower right corner. We can then divide the matrix into six blocks, 
\begin{equation*}
\begin{split}
B_1=1\times [[1,m_1]]\quad B_2=1\times [[m_1+1,m_2]]\quad B_3=1\times [[m_2+1,m]] \\
B_4=2\times [[1,m_1]]\quad B_5=2\times [[m_1+1,m_2]]\quad B_6=2\times [[m_2+1,m]],
\end{split}
\end{equation*}
such that the elements of blocks $B_1$ and $B_6$ are all negative, and the elements of the rest of the blocks are all non-negative. For clarity, we denote $S(B_i) = \sum_{w\in B_i} |w_i|$ as the sum of the absolute values of all elements in the $i$-th block. It is then clear that
\begin{equation*}
S(B_4) \geq S(B_1) 
\qquad
S(B_3) \geq S(B_6),
\end{equation*}
which follows directly from the application of the positive sum condition to each column. Furthermore, if we apply the positive sum condition on set $B_1 \cup B_2 \cup B_6$ and $B_1 \cup B_5 \cup B_6$, then we get
\begin{equation*}
S(B_2) \geq S(B_1) + S(B_6)
\qquad
S(B_5) \geq S(B_1) + S(B_6).
\end{equation*}

Combining these relations between the blocks, we can derive the following
\begin{equation*}
\begin{split}
\sum_i S(B_i)
& = \big( S(B_1) + S(B_4) \big) + \big( S(B_2) + S(B_5) \big) + \big( S(B_3) + S(B_6) \big) \\
& \geq 2 S(B_1) + 2 \big( S(B_1) + S(B_6) \big) + 2 S(B_6) \\
& = 4 \big( S(B_1) + S(B_6) \big),
\end{split}
\end{equation*}
so we see that the sum of the absolute values of all elements is at least four times the sum of the absolute values all negative elements. Therefore, from Eq.~(\ref{frus_eq}), we see that the frustration index must be smaller than $0.25$. 
\end{proof}

\section{Intersection Event}
\label{appendix:poisson}

If we are randomly dropping loop atoms on a $K_{n,m}$ bipartite graph, then the probability that the loop overlaps with any given edge is given by
\begin{equation*}
p = \frac{4}{nm}.
\end{equation*}
Therefore, if we were to drop one loop atom, then the probability that the edge receives a positive contribution is $\frac{3}{4}p$; the probability that it receives a negative contribution is $\frac{1}{4}p$; and the probability that it receives no contribution is $1-p$. If we denote the total number of random loop atoms as $N$, then the probability that any given edge receives $k_1$ negative contributions and $k_2$ positive contributions is given as
\begin{equation*}
P(k_1,k_2) = {N \choose {k_1,k_2}}\Big( \frac{1}{4}p \Big)^{k_1} \Big( \frac{3}{4}p \Big)^{k_2}(1-p)^{N-k_1-k_2}.
\end{equation*}
The expected number of intersections is then simply $\E(\min\{k_1,k_2\})$. 

To obtain an analytic expression for this expected value, we have to make a few simplifications. First, we can assume that $p\ll 1$, which is justified if the graph is large. If we denote $\lambda = Np$, then the marginal distributions of $k_1$ and $k_2$ are approximately Poisson distributions
\begin{equation*}
P(k_1) = \frac{e^{-\lambda/4}}{k_1!} \Big( \frac{\lambda}{4} \Big)^{k_1}
\qquad 
P(k_2) = \frac{e^{-3\lambda/4}}{k_2!} \Big( \frac{3\lambda}{4} \Big)^{k_2}.
\end{equation*}
We can also assume that $k_1$ and $k_2$ are approximately independent, which gives us
\begin{equation}
\label{mink}
\mathbf{E}(\min\{k_1,k_2\})=\frac{1}{2}\lambda-\frac{1}{2}e^{-\lambda}\sum_{k=0}^{\infty}k\Big( 3^{-k/2} + 3^{k/2} \Big) \text{I}_k \big( \frac{\sqrt{3}}{2}\lambda \big),
\end{equation}
where $\text{I}_k$ is the modified Bessel function of the first kind. As $\lambda$ increases, the pdf of $\min\{k_1,k_2\}$ approaches the pdf of $k_1$, and the expected value approaches $\lambda/4$ from below, which makes sense because the relative spacing of the random variables increases, and we effectively have $\min\{k_1,k_2\}\approx k_1$. In other words, we have
\begin{equation*}
\E( \min\{k_1,k_2\} ) \approx \frac{\lambda}{4} = \frac{N}{4nm}.
\end{equation*}

\section{Generating a $2\times 3$ gauged RBM}
\label{appendix:gen}

The negation of statement~(\ref{f2}) is given as follows:
\begin{equation}
\label{far_neg}
\begin{split}
&\text{If $\mathbf{y}$ satisfies } \mathbf{L}^T\mathbf{y}\geq 0 \text{ and } \mathbf{y}\geq 0 \text{ ,}\\
&\text{then it must also satisfy } \mathbf{w}^T\mathbf{y}\geq 0.
\end{split}
\end{equation}
The goal is to prove this statement true for a $2\times 3$ gauged RBM weight matrix $\mathbf{w}$. We first note that there are $12$ possible loop atoms for the system: 4 for the leftmost $2\times 2$ block, 4 for the rightmost $2\times 2$ block, and 4 for the union of the leftmost and rightmost column. Given any one of the leftmost loop, $\mathbf{l}$, the inequality $\mathbf{l}^T\mathbf{y}$ implies
\begin{equation*}
l_{11}y_{11} + l_{21}y_{21} + l_{12}y_{12} + l_{22}y_{22} \geq 0.
\end{equation*}
Each of the four leftmost loops corresponds to assigning one of the four edges, $\{l_{11},l_{12},l_{21},l_{22}\}$, to negative, and this results in four inequalities
\begin{equation*}
\begin{split}
y_{11} + y_{21} + y_{12} \geq y_{22}, \\
y_{21} + y_{12} + y_{22} \geq y_{11}, \\
y_{12} + y_{22} + y_{11} \geq y_{21}, \\
y_{22} + y_{11} + y_{21} \geq y_{12}.
\end{split}
\end{equation*}
WLOG, we assume that $y_{22}$ is the maximum of the four $y$ values, then the four inequalities reduce to the following inequality
\begin{equation*}
y_{11} + y_{21} + y_{12} \geq y_{22},
\end{equation*}
noting that the $y$ values are non-negative. A similar argument applies to the remaining 8 loop atoms.

For the weight matrix, $\mathbf{w}$, WLOG we can assume that the negative elements are $w_{11}$ and $w_{12}$, then the positive sum condition implies that
\begin{equation*}
\begin{split}
&w_{11}+w_{21}\geq 0, \quad w_{12}+w_{22}\geq 0, \\
&w_{11}+w_{12}+w_{13}\geq 0, \quad w_{11}+w_{12}+w_{23}\geq 0.
\end{split}
\end{equation*}
We can then derive the following relationship
\begin{equation*}
\begin{split}
&\mathbf{w}^T\mathbf{y} \\
= \, & w_{11}y_{11} + w_{12}y_{12} + w_{13}y_{13} + w_{21}y_{21} + w_{22}y_{22} + w_{23}y_{23} \\
\geq \, & y_{11}w_{11} + y_{12}w_{12} + y_{13}(-w_{11}-w_{12}) \\
+ \, & y_{21}(-w_{11}) + y_{22}(-w_{12}) + y_{23}(-w_{11}-w_{12}) \\
\geq \, & -w_{11}(y_{13}+y_{21}+y_{23}-y_{11}) \\
- \, & w_{12}(y_{13}+y_{22}+y_{23}-y_{12}) \\
\geq \, & 0,
\end{split}
\end{equation*}
since $w_{11}\leq 0$ and $w_{12}\leq 0$ by construction, and $y_{13}+y_{21}+y_{23}\geq y_{11}$ and $y_{13}+y_{22}+y_{23}\geq y_{12}$. Therefore, statement~(\ref{far_neg}) is true, which implies that any gauged $2\times 3$ RBM (which necessarily has $f \leq 0.25$) can be generated with loop atoms. 

\section{Local Minima}
\label{appendix:local_max}

Consider $n$ iid random variables, $\{x_1,x_2,...,x_n\}$, with the following PMF
\begin{equation*}
P(x_i=-\alpha)=\frac{1}{4} \qquad P(x_i=1)=\frac{3}{4},
\end{equation*}
where $\alpha \in (0,1]$. It is clear that the sum of the elements in this set is positive w.h.p. (with high probability) in the limit of large $n$. However, if we randomly select $n'$ elements and negate their signs, then it can be shown that the probability that the sum of the elements is positive is given by
\begin{equation*}
\frac{1}{2} \Big( 1+ \erf \Big( k \frac{n-2n'}{\sqrt{6n}} \Big) \Big)
\end{equation*}
in the limit of large $n$, where $k$ is related to $\alpha$ as
\begin{equation*}
k = \frac{3-\alpha}{\alpha+1}.
\end{equation*}

Now, for the sake of simplicity, consider a $n \times n$ random RBM with weights $\mbf{W}$ whose elements are assigned randomly as $-\alpha$ and $+1$ with probabilities $1/4$ and $3/4$, respectively (corresponding to the regime of high loop density, $\rho = O(n)$). Clearly, in the limit of large $n$, the RBM satisfies the positive-sum condition, meaning that its ground state is $\mbf{+1}$. Given $0 \leq n_1,n_2 \leq n$, we consider the switching subset 
\begin{equation*}
F = \big( [1,n_1]\times [n_2+1,n] \big) \,\cup\, \big( [n_1+1,n]\times [1,n_2] \big),
\end{equation*}
where integers are assumed. If the state related to the ground state by this switching subset is a local minimum, then clearly the following conditions have to be satisfied
\begin{equation*}
\begin{split}
\forall i\in [1,n_1], \qquad &\sum_{j=1}^{n_2} W_{ij} -  \sum_{j=n_2+1}^{n} W_{ij} \geq 0 \\
\forall i\in [n_1+1,n], \qquad &\sum_{j=n_2+1}^{n} W_{ij} -  \sum_{j=1}^{n_2} W_{ij} \geq 0  \\
\forall j\in [1,n_2], \qquad &\sum_{i=1}^{n_1} W_{ij} -  \sum_{i=n_1+1}^{n} W_{ij} \geq 0 \\
\forall j\in [n_2+1,n], \qquad &\sum_{i=n_1+1}^{n} W_{ij} -  \sum_{i=1}^{n_1} W_{ij} \geq 0. \\
\end{split}
\end{equation*}
If we make the approximation that the partial sum over a row and the partial sum over a column are independent (which is justified because the correlation between the two sums is only due to one single element at the intersection), then the probability that all the above conditions are satisfied is
\begin{equation*}
\begin{split}
p(n_1,n_2) = \frac{1}{2^{2n}}
&\erfc \Big( k\frac{n-2n_2}{\sqrt{6n}} \Big)^{n_1} 
\erfc\Big( k\frac{2n_2-n}{\sqrt{6n}} \Big)^{n-n_1} 
\erfc\Big( k\frac{n-2n_1}{\sqrt{6n}} \Big)^{n_2} 
\erfc\Big( k\frac{2n_1-n}{\sqrt{6n}} \Big)^{n-n_2} . \\
\end{split}
\end{equation*}

Note that there are ${n\choose n_1}$ ways to flip $n_1$ spins in the visible layer and ${n\choose n_2}$ ways to flip $n_2$ spins in the hidden layer. If we further choose to ignore the potential correlations between the local minima, then the expected number of local minima of an $n\times n$ RBM is given by
\begin{equation*}
\sum_{n_1=0}^n\sum_{n_2=0}^n 
p(n_1,n_2){n\choose n_1}{n\choose n_2} 
-p(0,0),
\end{equation*}
where the reason to subtract $p(0,0)$ is to discount the planted ground state being a trivial local minimum. 

It can be shown that this value scales poorly with $n$ and $k$ (which is inversely related to $\alpha$). In other words, for instances generated with the random loop algorithm at high density, it will be difficult for local minima to populate the energy landscape for large system sizes or small frustration indices.

\section{Planted Metastable Cluster}
\label{appendix:l_max}

Again, we consider the switching subset $F=B_1\cup B_4$ as given in Section \ref{degen}. The goal is to show that the state related to the ground state by $F$ satisfies the inequality conditions of a local minimum by a relatively large margin (meaning that any single spin-flip is expected to incur a large increase in energy), so that any states sufficiently ``close" to this state are likely also local minima. 

Since the structured loop algorithm is invariant under a matrix transpose and an exchange of the number of left and upper loops, we can simply focus on the sum of the elements in each individual row:
\begin{equation*}
\label{cond_gen}
\begin{split}
\forall i\in I \qquad &\sum_{J^c} W_{ij} \geq \sum_J W_{ij} \\
\forall i\in I^c \qquad &\sum_J W_{ij} \geq \sum_{J^c} W_{ij},
\end{split}
\end{equation*}
where we have $B_1 = I \times J$ and $B_4 = I^c \times J^c$. 

$\forall i\in I$, we can divide the row of the matrix into two halves, one half in $B_1$ and one half in $B_2$. We let the sums of the elements in the two halves be $s_1$ and $s_2$, respectively. Furthermore, let $n_1$, $n_2$, and $n_3$ be the numbers of left loops, upper loops, and center loops with vertices in row $i$. Each left loop contributes $1-\alpha$ units of weight to $s_1$; each upper loop contributes either $-\alpha$ or $+1$ units of weight to $s_1$ and $+1$ unit of weight to $s_2$; each center loop contributes $-\alpha$ units of weight to $s_1$ and $+1$ unit of weight to $s_2$. It is then clear that the difference between the two sums is given as follows
\begin{equation*}
s_2 - s_1 \in \big[ \, n_3(1+\alpha) - n_1(1-\alpha) \, , \, (n_2+n_3)(1+\alpha) - n_1(1-\alpha) \, \big],
\end{equation*}
whose lower bound should be non-negative in order to enforce the local minimum condition on row $i$. This gives us
\begin{equation*}
n_3(1+\alpha) - n_1(1-\alpha) \geq 0 \implies \frac{n_3}{n_1} \geq \frac{1-\alpha}{1+\alpha},
\end{equation*}
which is true for every $\alpha\in [0,1]$ if $n_3 \geq n_1$. Similarly, $\forall i\in I^c$, we can again divide the matrix row into two halves, one half in $B_3$ and one half in $B_4$, and we denote the sums over the two halves as $s_3$ and $s_4$. An upper loop does not contribute to either sum; a left loop contributes $2$ units of weight to $s_3$; and a center loop contributes $+1$ unit of weight to $s_3$ and $s_4$ each. Then the difference between the two sums is
\begin{equation*}
s_3 - s_4 = 2 n_1 \geq 0,
\end{equation*}
which is always true. Therefore, $n_3 \geq n_1$ on each row guarantees the local minimum condition, and similarly, $n_3 \geq n_2$ on each column guarantees the local minimum condition as well. These conditions will likely be satisfied for every row and column if we choose the number of center loops to be sufficiently large.

\subsection{Concentration of Frustration}

We see that a center loop is more conducive to the population of local minima than the other two types of loops, so from here on, we focus exclusively on instances generated by center loops alone. For instances generated at frustration index $f=0.25$, the magnitudes of the negative and positive loop edges are both $1$, so the structured loop algorithm is symmetric with respect to the exchange of $B_1$ and $B_4$. This means that parameterizing the size of the negative block with $d$ is equivalent to $1-d$. This symmetry is broken if we choose the magnitude of the negative edge weight to be slightly below $1$, or $\alpha = 1-\epsilon$ (where $0<\epsilon\ll 1$), and a smaller value of $d$ is generally favored for generating hard instances at high loop density. Intuitively, having a smaller value of $d$ ``concentrates" the negative weights into a smaller block, meaning that the expected value of the negative weights will be large, thus giving more strength to the weight structure for ``misguiding" local solvers away from the planted solution. We now make this argument slightly more formal.

For ease of analysis, we focus on the marginal distribution of the local field at $h_1$, which is given as
\begin{equation*}
\sum_i W_{i1}v_i.
\end{equation*}
This distribution is conditioned on having $N$ loops intersect with the first column of the weight matrix, and having an $r$ fraction of visible spins aligning with the planted ground state, which, WLOG, we assume to be $\mbf{+1}$. If we further denote
\begin{equation*}
n_1 = B(nd,r)
\end{equation*}
as the number of visible spins aligning with the planted solution in the upper-left block (where $B$ denotes the binomial distribution), then the local field can be expressed as a random variable parameterized as
\begin{equation*}
\begin{split}
L =\, & 2 B\Big( N,\frac{n_1}{nd} \Big)(\epsilon-1) + 2 B\Big(N, \frac{nr-n_1}{n(1-d)}\Big) - \epsilon N \\
=\, & 2 B\Big( N,\frac{B(nr,d)}{nd} \Big)(\epsilon-1) + 2 B\Big(N, \frac{nr-B(nr,d)}{n(1-d)}\Big) - \epsilon N.
\end{split}
\end{equation*}
Using the linearity of the expected value operator, one can easily show that the expected local field is $N\epsilon(2r-1)$, which is independent of the parameter $d$, and is proportional to the fraction of hidden spins aligned with the planted ground state. 

The parameter $d$ is relevant when we evaluate the total variance of the local field (over the probability measure of $n_1$), which can be computed via the law of total variance
\begin{equation*}
\Var(L) = \Var\big( \E(L\cond n_1) \big) + \E\big( \Var(L\cond n_1) \big).
\end{equation*}
It is convenient for us to define the {\it dispersion} of the local field as its relative standard deviation,
\begin{equation*}
c_v(L) = \frac{\sqrt{\Var(L)}}{\E(L)},
\end{equation*}
which provides a measure of the uncertainty of the local field for each spin normalized against the weight magnitude. In Figure \ref{disperse}, we see that the dispersion is greater if we have a greater concentration of negative weights (smaller $d$), resulting in a greater variation in the hardness of the instances induced by the stochasticity of both the generation method and the solver.

\begin{figure}
\begin{center}
\includegraphics[scale=0.6]{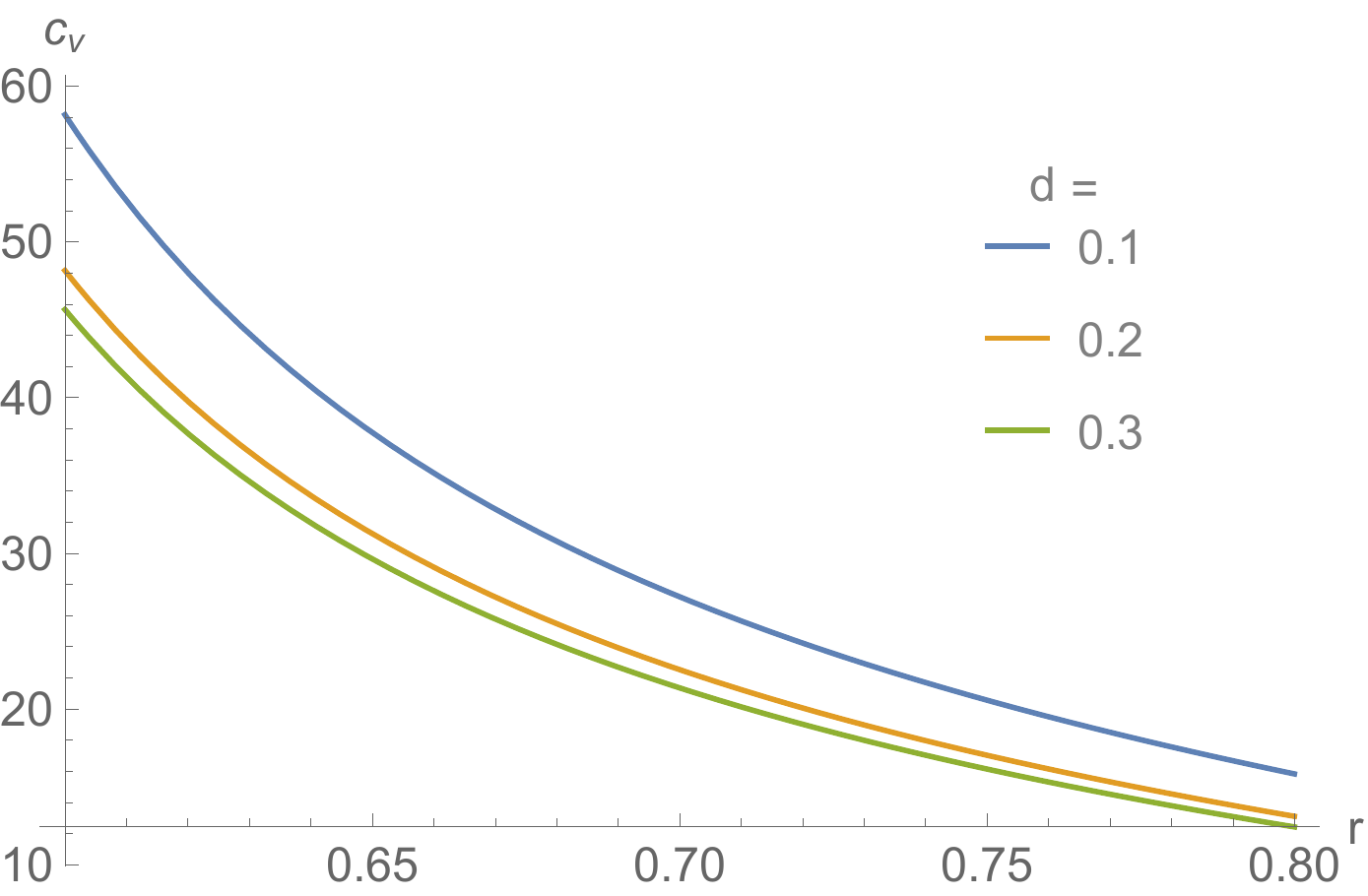}
\caption{\label{disperse} The dispersion of the local field ($c_v$) with respect to the alignment of the spins ($r$) for different concentrations of negative weights ($d$). The parameters, $\{n,N,\epsilon\}=\{1000,1000,0.01\}$ are chosen to produce the plots, noting that the loop density is extensive.}
\end{center}
\end{figure}

\section{Simulated Annealing}
\label{appendix:simu}

We first assign the following probability to each spin state $\mbf{s}$
\begin{equation*}
p(\beta,\mbf{s}) = e^{-\beta E(\mbf{s})},
\end{equation*}
where $\beta$ is interpreted as the inverse temperature of the system. The simulated annealing (SA) algorithm can be thought of as a Metropolis-Hastings sampling algorithm on a PMF varying in time (based on the $\beta$ schedule). Recall that the acceptance ratio for the Metropolis-Hastings algorithm is
\begin{equation*}
A(\mbf{s},\mbf{s'}) = \min\Big( 1,\frac{p(\mbf{s'})}{p(\mbf{s})} \Big) = \min\Big( 1,e^{-\beta\big( E(s') - E(s) \big)} \Big).
\end{equation*}
An iteration of the SA algorithm over the spins is referred to as a {\it sweep}. A sweep consists of performing single-spin flips over all the spins in the visible layer, followed by spin flips over the hidden layer. Usually, solving a non-trivial RBM instance using SA requires multiple sweeps. Since this is a single-spin flip algorithm, we focus on the energy difference of a single spin flip. Recall that the RBM energy is given by
\begin{equation*}
E(\mbf{s}) = -\big( \sum_ {ij} W_{ij}v_ih_j + \sum_i a_iv_i + \sum_j b_jh_j \big),
\end{equation*}
so the energy change from flipping the spin $v_i$ is given by
\begin{equation*}
\begin{split}
&-E(v_i',\mbf{h})+E(v_i,\mbf{h}) \\
= \,& -a_i(v_i'-v_i)+\sum_j W_{ij}(v_i'-v_i)h_j \\
= \,& -2v_i'(a_i + \sum_j W_{ij}h_j) \\
= \,& -2v_i'\theta_i(\mbf{h}),
\end{split}
\end{equation*}
where $v_i' = -v_i$, and we denoted $\boldsymbol{\theta}(\mbf{h}) = \mathbf{W}\mathbf{h}$. Similarly, the energy change from flipping the spin $h_j$ is given by
\begin{equation*}
-E(v,h_j') + E(v,h_j) = -2h_j'\phi_j(\mbf{v}),
\end{equation*}
where we define $\boldsymbol{\phi}(\mbf{v}) = \mathbf{W}^T\mathbf{v}$. Then, the acceptance ratio can be written as
\begin{equation*}
\begin{split}
A(v_i,v_i') &= \min(1,e^{2\beta v_i'\theta_i}), \\
A(h_j,h_j') &= \min(1,e^{2\beta h_j'\phi_j}).
\end{split}
\end{equation*}

If the total number of sweeps for a given run is $N_{sweep}$, we can then set $\beta \in [\beta_{min}, \beta_{max}]$ to follow a linearly increasing schedule, or
\begin{equation*}
\beta = \beta_{min} + \frac{c-1}{N_{sweep}-1}(\beta_{max} - \beta_{min}),
\end{equation*}
We provide a basic pseudo-code for the SA algorithm in Algorithm \ref{sian}, where the angle updates for a single spin flip are given by
\begin{equation*}
\begin{split}
\Delta \theta_i = W_{ij}(h_j'-h_j) = 2 W_{ij} h_j', \\
\Delta \phi_j = W_{ij}(v_i'-v_i) = 2 W_{ij} v_i'.
\end{split}
\end{equation*}

\begin{algorithm}[H]
\caption{Simulated Annealing on RBMs}\label{sian}
\begin{algorithmic}[1]

\State $\text{Initialize a random spin configuration } \mathbf{v}^{(0)}, \mathbf{h}^{(0)}$
\State $\boldsymbol{\theta^{(0)}} = \mathbf{W}\mathbf{h^{(0)}}, \boldsymbol{\phi^{(0)}} = \mathbf{W^T}\mathbf{v^{(0)}}, E^{(0)}=E(\mathbf{v^{(0)}},\mathbf{h^{(0)}})$
\For{$c \in [1,N_{sweep}]$} 
\State $\beta = \beta_{min} + (\beta_{max} - \beta_{min})\frac{c-1}{N_{sweep}-1}$
\For{$i\in [1,n]$}
\State $v_i = -v_i, A = \min(1,e^{2\beta v_i\theta_i^{(c-1)}})$
\If {$rand()<A$}
\State $\text{Get new } \boldsymbol{\phi^{(c)}}, E^{(c)}$
\Else
\State $v_i = -v_i$
\EndIf
\EndFor
\For{$j\in [1,m]$}
\State $h_j = -h_j, A = \min(1,e^{2\beta h_j\phi_j^{(c-1)}})$
\If {$rand()<A$}
\State $\text{Get new } \boldsymbol{\theta^{(c)}}, E^{(c)}$
\Else
\State $h_j = -h_j$
\EndIf
\EndFor
\EndFor

\end{algorithmic}
\end{algorithm}

Assuming that $m$ scales linearly with $n$, then the size of the RBM (total number of spins) is of the order $O(n)$. Flipping a single spin requires updating the entire $\boldsymbol{\theta}$ or $\boldsymbol{\phi}$ vector, so the time complexity of performing a single spin update is $O(n)$. Performing a sweep requires flipping all the spins of the RBM, so the time complexity of a single sweep is $O(n^2)$.

\section{Optimal Sweep Schedule}
\label{appendix:sweep}

For a given triplet of the parameters $\{n,f,\rho\}$, it is possible to generate multiple random RBM instances; for each instance, we record the total number of sweeps required to find the solution as $N_{tot}$. This gives us multiple values of $N_{tot}$ each corresponding to an RBM instance generated. It can be checked that the distribution of $N_{tot}$ follows approximately a log-normal distribution. If we have $k$ samples of $N_{tot}$, the estimator of the log mean of $N_{tot}$ is given as
\begin{equation*}
\hat{\mu} = \frac{1}{k}\sum_{i=1}^k \log\big( N_{tot,i} \big),
\end{equation*}
and the estimator of the log standard deviation of $N_{tot}$ is given as
\begin{equation*}
\hat{\sigma} = \sqrt{ \frac{1}{k-1}\sum_{i=1}^k \Big( \log\big( N_{tot,i} \big) -\hat{\mu} \Big)^2 }.
\end{equation*}
We can then estimate the 5th and 95th percentile of the distribution of $N_{tot}$ respectively as
\begin{equation*}
N_{tot,5\%} = \exp(\hat{\mu} - 2\hat{\sigma})
\qquad
N_{tot,95\%} = \exp( \hat{\mu} + 2\hat{\sigma} ).
\end{equation*}
For the remainder of this section, the 95th percentile of $N_{tot}$ is assumed whenever we refer to $N_{tot}$.

To ensure that the hardness is accurately measured for the RBM instance, we have to ensure that we are using the optimal SA parameters for every given triplet $\{n,f,\rho\}$, which is equivalent to finding the SA parameters that result in the smallest $N_{tot}$ value. This generally incurs a large amount of computational expense as it involves multiple runs under different SA parameters. So for this work, we focus on finding the optimal SA parameters for easy instances (with small $\{n,f\}$) such that the scaling behavior can be determined and the optimal SA parameters for harder instances can be extrapolated. 

Our focus here is to determine the scaling behavior of the optimal $N_{sweep}$ with respect to $\{n,f\}$ at the hardest loop density. This is performed in two iterations, a crude optimization followed by a finer one. In the first iteration, we essentially assume that the hardness peak is at $\rho_0 = 0.47$, and proceed to determine the scaling behavior of $N_{sweep}$ with respect to $\{n,f\}$. Then, using this non-optimal $N_{sweep}$, we can attempt to determine the relationship between the location of the actual hardness peak, $\rho_0$, and the system size, $n$. Note that it is unimportant that $N_{sweep}$ is not yet fully optimized at this stage as the location of the hardness peak is rather insensitive to the choice of $N_{sweep}$ (see Section \ref{Section_density}). After determining the hardness peaks for various system sizes, we then proceed with the second iteration of the optimization, where a more accurate scaling behavior of $N_{sweep}$ with respect to $\{n,f\}$ is determined by setting $\rho$ properly at the hardness peak for every system size. In theory, this procedure can be re-iterated as many times as needed. However, in practice, we find that the coefficient estimators of the scaling law already converge within their mean-square errors only after two iterations.

We now present the results for the two iterations. In the first iteration, we fix the loop density at $\rho = 0.47$ and let $n\in \{30,40,50,60,70,80\}$ and $f\in \{0.05,0.075,0.1,0.125,0.15,0.175\}$ so the instances are sufficiently easy to solve within a reasonable amount of time. For each pair of $\{n,f\}$, we generate 10000 different RBM instances, and try to find the optimal $N_{sweep}$ that minimizes the 95th percentile of $N_{tot}$. We fit the relationship between the optimal $N_{sweep}$ and $\{n,f\}$ with a product of two polynomials corresponding to the two parameters:
\begin{equation*}
\begin{split}
N_{sweep}(n,f) &= (0.504n^2-13.3n+311) \\
&\times (193f^3-52.7f^2+4.73f-0.102).
\end{split}
\end{equation*} 
We then use this value of $N_{sweep}$ to determine the scaling behavior of the hardest density with respect to the system size (see Section \ref{Section_density}). In the second iteration, we then use the optimal value of $N_{sweep}$ properly optimized at the hardness peak to derive more accurately the following fitting function
\begin{equation*}
\begin{split}
N_{sweep} &= (1.29n^2-33.1n+1664) \\
&\times (41.4f^3-11.7f^2+1.06f-0.018).
\end{split}
\end{equation*} 
This sweep schedule is used to study the phase transition induced by the frustration index presented in Section \ref{Section_frus}.

%
%

\raggedbottom
\pagebreak

\bibliographystyle{apsrev4-1}
\bibliography{SUSY_ref}

\end{document}